\documentclass[11pt]{article}

\usepackage[numbers]{natbib}

\usepackage{booktabs}
\usepackage{amsfonts}
\usepackage{amsmath}
\usepackage{graphicx}
\usepackage{amsthm}
\usepackage{amssymb}
\usepackage{multirow}
\usepackage{makecell}
\usepackage[vlined,linesnumbered,ruled,resetcount]{algorithm2e}
\usepackage[colorlinks,linkcolor=magenta,filecolor=blue,citecolor=blue,urlcolor=blue]{hyperref}
\usepackage[margin=0.5in]{geometry}
\usepackage{subfigure}
\usepackage{pifont}
\usepackage{mathtools}
\usepackage{stmaryrd}
\usepackage[T1]{fontenc}
\usepackage[utf8]{inputenc}
\usepackage{lmodern}

\usepackage{authblk}

\usepackage{enumitem}
\usepackage{algorithmic}
\usepackage{microtype}
\usepackage{xcolor}
\usepackage{bm}
\usepackage{float}
\usepackage{caption}
\usepackage{subcaption}
\usepackage{wrapfig}
\usepackage{tabularx}
\usepackage{url}
\usepackage{thm-restate}
\usepackage{adjustbox}
\usepackage{array}

\newtheorem{theorem}{Theorem}[section]
\newtheorem{lemma}[theorem]{Lemma}
\newtheorem{proposition}[theorem]{Proposition}

\newtheorem{assumption}[theorem]{Assumption}

\theoremstyle{definition}
\newtheorem{definition}[theorem]{Definition}

\theoremstyle{remark}

\DeclareMathOperator{\Cov}{Cov}

\renewcommand{\epsilon}{\varepsilon}
\renewcommand{\phi}{\varphi}

\newcolumntype{R}[2]{%
  >{\adjustbox{angle=#1,lap=\width-(#2)}\bgroup}%
  l%
  <{\egroup}%
}

\title{Forget Without Compromise: Nexus Sampling for Streaming\\ KV-Cache Eviction Under Fixed Budgets}

\author[1]{Duc Duong\thanks{Equal contribution. Contact: \texttt{el72@rice.edu} and \texttt{zhaozhuo.xu@workato.com}.}}
\author[2]{Hoang Anh Duy Le\protect\footnotemark[1]}
\author[4]{Jianwen Xie}
\author[2]{Anshumali Shrivastava}
\author[3]{Zhaozhuo Xu}
\affil[1]{Department of Computer Science, Grinnell College}
\affil[2]{Department of Computer Science, Rice University}
\affil[3]{Workato}
\affil[4]{Lambda, Inc}
\date{}

\begin{document}

\maketitle

\begin{abstract}
  Long-context and agentic LLM workloads push the KV cache past any fixed memory budget, forcing the inference stack to permanently \emph{evict} tokens at every step of a continuous-inference stream.
  Existing methods all share the same template, a per-step direct-attention score followed by \emph{deterministic top-$K$} selection, which converts a single below-cutoff step into an irreversible verdict and permanently erases any subtly important token that direct attention cannot single out from noise.
  To address this challenge, we propose \textbf{Nexus Sampling}, a training-free eviction method that pairs \emph{Nexus scoring}, an iterative walk over direct attention that surfaces bridge tokens, with \emph{weighted reservoir sampling}, which retains tokens with inclusion probability in place of deterministic top-$K$.
  Theoretically, we show that Nexus Sampling dominates deterministic top-$K$ in long-run survival of subtly important tokens.
  Empirically, at $80\%$ KV cache eviction, Nexus Sampling matches dense attention within $\sim$1 point on LongBench while outperforming top-$K$ baselines on retrieval-heavy tasks, with up to $10\times$ smaller per-sequence cache memory.
\end{abstract}

\section{Introduction}
\label{sec:introduction}

GPU memory imposes a hard ceiling on what an LLM inference stack can hold during long-context generation, yet the Key-Value (KV) cache grows linearly with context length and quickly exceeds this budget~\citep{pope2023efficiently,dao2022flashattention}.
The gap widens further in long-lived, agent-style deployments such as multi-turn assistants, persistent reasoning loops~\citep{openai2024o1}, and repository-scale coding agents~\citep{anthropic2025sonnet,jimenez2024swebench}, where the effective context grows toward an unbounded stream.
Sparse attention methods~\citep{tang2024quest,le2026sketchwalk,yuan2025blasst} address this efficiency problem partially by \emph{reading} a still-full cache more efficiently.
But once the cache no longer fits in memory, reading it efficiently is not enough: the inference stack must permanently \emph{evict} tokens.
A growing body of work~\citep{zhang2023h2o,xiao2024streamingllm,cai2024pyramidkv,feng2025adakv,li2024snapkv,ghadia2025morphkv} performs this \emph{KV cache eviction}, retaining the tokens that score highest under the current (or recent) queries' attention and dropping the rest.
Existing methods all share the same per-step design as sparse attention: at every eviction step, they retain the $K$ highest-scoring tokens by \emph{deterministic top-$K$} selection.
We argue, however, that \textbf{eviction is fundamentally different from sparse attention:} while sparse attention performs a one-time read of a still-full cache, eviction is a \emph{streaming, fixed-budget} problem in which tokens arrive continuously, evictions are irreversible, and the policy must repeatedly decide what to retain against future queries it has not yet seen.
Under this view, per-step deterministic top-$K$ is a suboptimal primitive: it treats each step in isolation and inherently confuses transient marginality with permanent unimportance, so a token whose score is only intermittently above the cutoff is dropped on the first marginal step, even though its long-run average importance across steps may be high.

\noindent\textbf{Deterministic top-$K$ misses tokens whose importance fluctuates.}
Per-step importance scores in KV caches are noisy and time-varying.
Attention in modern LLMs is heavy-tailed~\citep{zhang2023h2o,le2026sketchwalk}: a small head of tokens holds most of the attention mass, while the rest of the cache sits at near-uniform low scores where the relative ranking between tokens is dominated by noise (Figure~\ref{fig:heavy_tailed}).
A token important to query $t$ may look marginal at $t\!+\!1$ and important again at $t\!+\!10$.
Top-$K$, however, treats each of these per-step scores as a final verdict: a token that lands below the cutoff at any single step is dropped with the same finality as a token deep in the tail, regardless of how close its score was to the cutoff.
A long context produces many such verdicts, and the per-step errors compound.
The result is \emph{monotone marginal erosion}: every token whose score is only intermittently above the cutoff is silently and permanently lost.

\begin{wrapfigure}{r}{0.5\textwidth}
    \vspace{-1em}
    \centering
    \includegraphics[width=\linewidth]{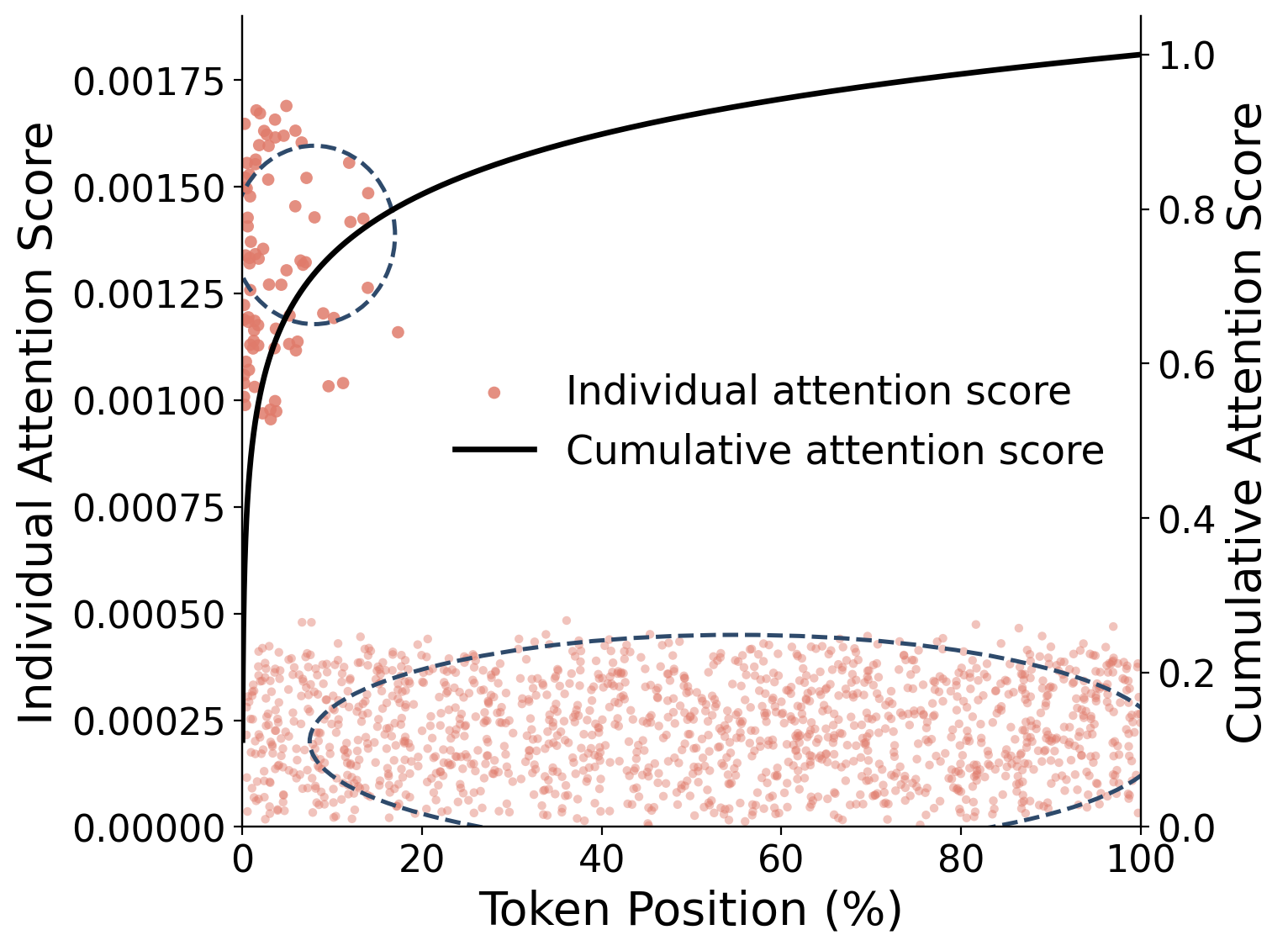}
    \vspace{-1em}
    \caption{Attention scores are heavy-tailed: $\sim$80\% of cumulative mass concentrates on the first $\sim$15\% of token positions (black curve), with a small head of high-score tokens (top-left cluster) and a long, noisy tail of marginal-score tokens (bottom band) where deterministic top-$K$ eviction is forced to choose between near-identical scores.}
    \label{fig:heavy_tailed}
\end{wrapfigure}

\noindent\textbf{Reservoir sampling is the streaming-algorithms answer to this setting.}
Weighted reservoir sampling~\citep{vitter1985random,efraimidis2006weighted} retains each token with probability proportional to its current score, so that a token's long-run survival across many steps tracks its \emph{average importance across steps} rather than collapsing on the first step where its score happens to land below the cutoff.
We believe this is the right candidate of selection primitive for KV cache eviction, but it inherits the calibration of the score it samples against.
The direct-attention magnitude used by existing methods is, however, locally myopic: it counts only what the current query attends to, and misses what we call \emph{bridge tokens} - tokens that no single recent query attends to strongly, but that hold together a strongly-connected cluster of mutually-attended tokens across the context, such that removing one would sever the cluster's internal connections.
Bridges are a canonical instance of marginal-rank tokens: deterministic top-$K$ evicts them at every step, even though the clusters they hold together may be highly important.
We therefore lift the reservoir's input weight from this direct-attention score to a \emph{Nexus score} that surfaces bridges before any selection happens.

We propose \textbf{Nexus Sampling}, a training-free KV cache eviction method built around two new mechanisms that together address both failure modes identified above, applied uniformly across prefill and decode.
Upstream, \emph{Nexus scoring} passes the direct-attention score through a short iterative walk recurrence that surfaces bridge tokens before any selection happens.
Downstream, \emph{weighted reservoir sampling} draws the $K$ retained tokens with inclusion probability proportional to this walk-augmented weight, replacing the deterministic top-$K$ selection step that every prior eviction method shares.
Each fix recovers a category of importance the existing template cannot see: the walk recovers bridges that direct-attention top-$K$ misses, and the reservoir recovers the marginal-rank tokens that any deterministic top-$K$ erodes.
Theoretically, Nexus's long-run token survival decays as a product of per-step inclusion probabilities (a mean over steps), where deterministic top-$K$'s collapses to zero on the first below-cutoff step (details in App.~\ref{sec:theory}).
The contributions are:

\begin{itemize}[leftmargin=*,nosep]
    \item We identify a fundamental limitation of deterministic-top-$K$ KV cache eviction: it converts transient marginality into permanent loss, producing \emph{monotone marginal erosion} in the streaming, fixed-budget regime that is inherent to KV cache eviction.
    \item We propose \textbf{Nexus Sampling}, a training-free KV cache eviction method that applies uniformly to prefill and decode, combining weighted reservoir sampling with a \emph{Nexus score} that surfaces \emph{bridge tokens}, tokens anchoring strongly-connected clusters of mutually-attended tokens across the window.
    \item We establish theoretical guarantees showing that reservoir sampling's long-run token survival is a \emph{product over steps} where deterministic top-$K$'s is a \emph{min over steps}, and empirically demonstrate at $20\%$ density that Nexus Sampling stays within $\sim$1 point of dense attention on LongBench while outperforming top-$K$ baselines on retrieval-heavy long-context tasks, with up to $10\times$ smaller per-sequence cache memory than dense FlashAttention-2.
\end{itemize}

\section{Nexus Sampling}
\label{sec:method}

We introduce \textbf{Nexus Sampling}, a KV cache eviction method that runs alongside decoding: at every eviction step $t$ (whenever the budget is hit during the decoding stream), Nexus produces the retention set $S_t$ via two components applied in sequence: \emph{Nexus scoring} followed by \emph{weighted reservoir selection}.
Nexus scoring passes the per-block attention score through a short iterative walk recurrence that surfaces \emph{bridge tokens}, tokens anchoring strongly-connected clusters of mutually-attended tokens, and yields a per-block sampling weight $\mathbf{w}^{(t)}$ that reflects indirect importance.
The reservoir step then draws the $K$ retained blocks with inclusion probability proportional to $\mathbf{w}^{(t)}$ under the weighted-without-replacement law, replacing the deterministic top-$K$ selection that every prior eviction method shares.
The name reflects the two roles each component plays at every step in the stream: Nexus scoring identifies the \emph{nexus} blocks that hold a cluster together, and the reservoir \emph{samples} from the resulting weights so that every positive-weight block keeps positive inclusion probability rather than being deterministically condemned the first time it lands at the marginal rank.

\subsection{Preliminaries and Notation}
\label{sec:prelim}

\noindent\textbf{Streaming view.}
We treat KV cache eviction as a streaming problem over the steps of inference.
At each step $t$, the LLM ingests one new query $q_t \in \mathbb{R}^{1 \times D}$ (a prompt token during prefill or a decoded token during decode), attends over the current cache $K_t, V_t$, and appends one new (key, value) pair, after which the cache may exceed the memory budget.
An \emph{eviction step} occurs whenever the budget is hit: a subset $S_t \subseteq \{1,\dots,T_t\}$ of $B$ positions is retained, the rest are dropped, and inference continues with $K_{S_t}, V_{S_t}$.
Eviction is therefore relevant in both phases: a long prompt can trigger a one-shot eviction at the end of prefill, while a long generation produces a long stream of eviction steps throughout decode.
In both cases, eviction is irreversible: positions evicted at step $t$ are not recoverable at any step $t' > t$.

\noindent\textbf{Block granularity.}
We work at the granularity of \emph{key blocks} rather than individual tokens, amortizing the per-decision cost over $b$ tokens ($b=32$); blocks are indexed by $j \in \{1,\dots,N_k\}$ with $N_k = \lceil T_k / b \rceil$ where $T_k$ is the number of cached keys.

\noindent\textbf{Per-block base score.}
Single-query attention is noisy and dominated by the immediate token, so following standard practice~\citep{li2024snapkv,ghadia2025morphkv} we score blocks against a length-$W$ \emph{observation window} of recent queries $\mathbf{Q} \in \mathbb{R}^{W \times D}$ on head-averaged representations.
The per-block \emph{base score} $\mathbf{a} \in \mathbb{R}^{N_k}$ is the row average of $\widehat{\mathbf{S}} = \operatorname{rownorm}(\operatorname{BlockSum}_b(\operatorname{softmax}(\mathbf{Q}\mathbf{K}^\top / \sqrt{D})))$:
\begin{equation}
    \mathbf{a} = \tfrac{1}{W} \textstyle\sum_{w=1}^{W} \widehat{\mathbf{S}}_{w,\cdot},\quad \textstyle\sum_j a_j = 1,
    \label{eq:base-score}
\end{equation}

\noindent\textbf{Notation.}
Throughout the rest of the section we use $W$ for the observation-window length, $\widehat{\mathbf{S}} \in \mathbb{R}^{W \times N_k}$ for the per-row block distribution, $\mathbf{a} \in \mathbb{R}^{N_k}$ for the base score (Eq.~\ref{eq:base-score}), $\mathbf{c} \in \mathbb{R}^{N_k}$ for the walk state, $\mathbf{w} \in \mathbb{R}^{N_k}$ for the combined sampling weight, $B$ for the total retained-block budget, and $n$ for the reservoir averaging count.
A full notation table appears in App.~\ref{app:notation}.

\begin{figure*}[t]
  \centering
  \includegraphics[width=0.85\linewidth]{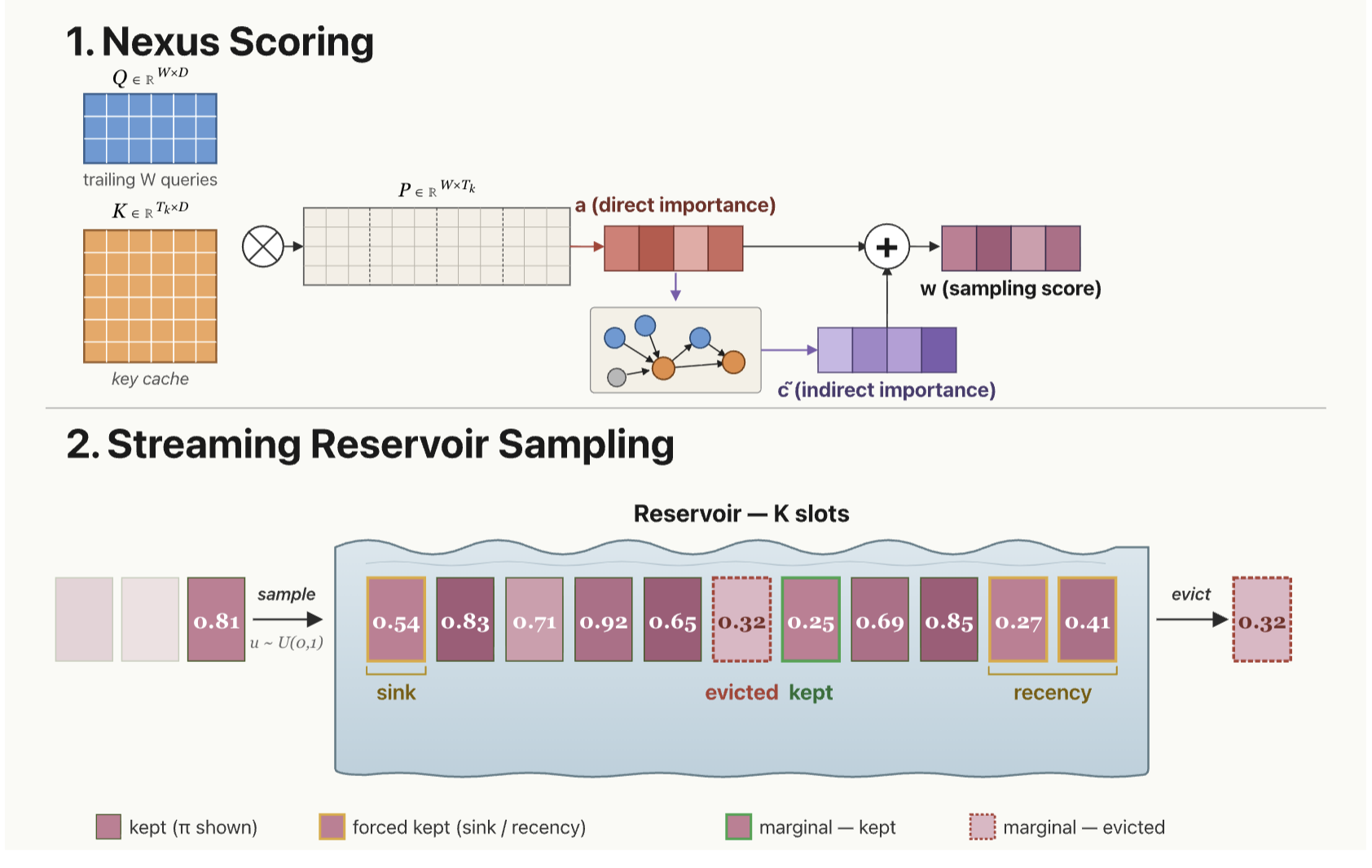}
  \caption{Overview of Nexus Sampling at one eviction step.
    \emph{Nexus scoring} (top) combines a direct importance term (windowed per-query attention) with a bridge term obtained by an iterative walk over the same query rows.
    \emph{Streaming reservoir sampling} (bottom) then draws the retained blocks with inclusion probability proportional to the resulting weight, replacing the deterministic top-$K$ used by every prior eviction method.}
  \label{fig:nexus-overview}
\end{figure*}

\subsection{Nexus Scoring}
\label{sec:method-walk}

Nexus scoring assigns each block a two-term \emph{Nexus score}
\begin{equation}
  s_j = a_j + \lambda\,\tilde{c}_j,
  \label{eq:nexus-score}
\end{equation}
that captures both the \emph{direct} importance $a_j$ of block $j$ (how much the recent queries attend to it) and an \emph{indirect} importance $\tilde{c}_j$ that measures how strongly block $j$ acts as a \emph{bridge token}, i.e., how strongly it anchors a cluster of mutually-attended tokens across the window even when no single recent query attends to it strongly.
The two terms cover complementary failure modes of a single-query attention score: $a_j$ scores blocks the current query has reason to look at; $\tilde{c}_j$ scores blocks the current query has reason to \emph{remember}, by virtue of their role in the recent context.
The mixing weight $\lambda \geq 0$ balances the two terms.

Both terms are computed from the same per-query rows of $\widehat{\mathbf{S}}$.
Denote the rows by $\mathbf{a}^{(q)} \in \mathbb{R}^{N_k}$, so $\mathbf{a}^{(q)} = \widehat{\mathbf{S}}_{q,\cdot}$ is the block distribution of window query $q$.
The \emph{direct} importance is the uniform row-average,
\begin{equation}
  \mathbf{a} \;=\; \tfrac{1}{W} \textstyle\sum_{q=1}^{W} \mathbf{a}^{(q)},
  \label{eq:direct}
\end{equation}
which treats each window query as an equal vote.
The \emph{bridge} score, by contrast, aggregates the same rows through an alignment-weighted iterative walk.
Starting from $\mathbf{C} = \mathbf{0}$, the walk runs for $H$ steps; at each step we plug in a fresh window query row $\mathbf{a}^{(q)}$ and update
\begin{equation}
  \mathbf{C} \;\leftarrow\; \mathbf{C} + \gamma_q\,\mathbf{a}^{(q)},\qquad
  \gamma_q = 1 + \langle \mathbf{a}^{(q)},\,\mathbf{C}\rangle,
  \label{eq:walk}
\end{equation}
from which $\tilde{\mathbf{c}} = \mathbf{C} / \|\mathbf{C}\|_1$.
The per-block weight fed to the reservoir is the Nexus score plus a small recency tie-break,
\begin{equation}
  w_j = s_j + \varepsilon_{\text{tie}}\,r_j,\quad r_j = j/(N_k - 1),
  \label{eq:combined}
\end{equation}
with $r_j$ a linear ramp from $0$ (oldest block) to $1$ (newest), and $\varepsilon_{\text{tie}}$ small enough that it does not flip any real signal but large enough to deterministically break ties in favor of more-recent blocks.

\subsection{Weighted Reservoir Block Selection}
\label{sec:method-ares}

Given the per-block weight $\mathbf{w} \in \mathbb{R}^{N_k}$ from \S\ref{sec:method-walk}, the reservoir step decides which $K$ of the $N_k$ blocks survive the eviction step.
The choice of selection rule here, not the choice of weight, is what distinguishes Nexus Sampling from every prior eviction method.
Following standard practice, we retain the attention-sink block and a small recency window unconditionally; these account for $B_{\text{forced}}$ slots of the total budget $B$, leaving $K = B - B_{\text{forced}}$ slots for the reservoir step below.

\noindent\textbf{$n$-averaged reservoir priority.}
For every remaining candidate block $j$, we draw $n$ independent uniform variates $u_j^{(1)},\dots,u_j^{(n)} \sim \mathcal{U}(0,1)$ and form the \emph{$n$-averaged reservoir priority}
\begin{equation}
  \pi_j = \frac{1}{n}\sum_{i=1}^{n} \bigl(u_j^{(i)}\bigr)^{1/w_j}.
  \label{eq:ares}
\end{equation}
We then keep the top $K$ candidates by $\pi_j$.

\section{Why Nexus Sampling for KV Cache Eviction?}
\label{sec:why}

Every existing eviction method makes two implicit choices: that a token's importance is read off from \emph{direct} per-step attention, and that the $K$ surviving tokens are picked by \emph{deterministic top-$K$}.
Both choices are defensible for a one-shot prefill against a still-dense cache, but both might break in the streaming, fixed-budget regime that defines KV cache eviction (\S\ref{sec:method}).
Nexus Sampling replaces them with two components, Nexus scoring and weighted reservoir selection, each addressing a failure mode that the other cannot reach on its own.

\noindent\textbf{Deterministic top-$K$ erases subtly important tokens at the cutoff; the reservoir preserves them.}
A streaming eviction policy is applied at every step, and under deterministic top-$K$ a token at rank $K\!+\!1$ is dropped with the same finality as a token at rank $10^6$, so a token survives the first $e$ steps only if its weight clears the cutoff at \emph{every} step.
Long-run survival is thus a min over steps and collapses to zero on the first below-cutoff step, permanently erasing any token whose per-step weight sits just below the cutoff, even if it is subtly important on average and would be relevant later in the stream.
Weighted reservoir sampling~\citep{efraimidis2006weighted} replaces this min with a product: per-step inclusion probability is monotone in the weight and strictly positive whenever the weight is, so long-run survival
\begin{equation*}
    p_j^{(e)} \;=\; \prod_{e' \leq e} \Pr\!\bigl(j \in \mathrm{top}\text{-}K\text{ by }\pi^{(e')}\bigr)
\end{equation*}
stays bounded away from zero whenever $w_j^{(e')}$ does (\S\ref{sec:method-ares}; Lemma~\ref{lem:streaming}).
The reservoir thus fixes the \emph{selection rule}: it converts transient marginality into a controlled per-step probability rather than an irreversible verdict, and exactly the subtly important tokens the deterministic rule would erase are the ones it keeps at a controlled non-zero rate.

\noindent\textbf{Direct attention misses bridge tokens; Nexus scoring lifts them.}
\emph{Bridge tokens} are a canonical instance of the subtly important tokens above: tokens that no single recent query attends to \emph{strongly}, but that sit at the intersection of many mutually-attended tokens in the window and structurally hold their cluster together.
The direct score $\mathbf{a}$ is a uniform average of per-query distributions and does not reward this structural role: a bridge typically picks up moderate mass from many queries but rarely a peak from any one, so it lands in the noisy mid-band of the heavy-tailed score distribution~\citep{zhang2023h2o,le2026sketchwalk} rather than in the head.
Even a perfect selection rule cannot recover such a token from this score, because sampling (or top-$K$) proportional to a weight that does not see the structural signal samples from the wrong distribution.
The bridge term $\tilde{c}_j$ in the Nexus score (\S\ref{sec:method-walk}) is built precisely to extract this higher-order signal: an iterative walk over the same per-query rows compounds mass on tokens that successive queries \emph{agree on}, amplifying exactly the cluster-anchoring positions a uniform average flattens out, which we formalize as \emph{hub amplification} (Theorem~\ref{thm:main}).
The walk thus fixes the \emph{weight construction}: it ensures the input to selection already reflects indirect importance, so the reservoir's preservation guarantee applies to bridges rather than to a wrong notion of importance.

\noindent\textbf{Both fixes are needed, and they compose.}
The two components are not redundant; each guards against a failure mode the other cannot.
The reservoir alone cannot save bridge tokens: if they are not lifted into the head of the weight, sampling proportional to weight samples from the wrong distribution.
The walk alone cannot save subtly important tokens that still sit at the cutoff after scoring: a comprehensive per-step weight is still terminated by deterministic top-$K$ the first time it lands below the cutoff.
Bridge tokens are recovered upstream in the weight; near-cutoff subtly important tokens are recovered downstream in the selection; together they remove both of the implicit assumptions above.
The empirical picture in \S\ref{sec:experiments} matches: existing methods do well when the score has a clear head and the relevant tokens are stably top-ranked, and existing methods degrade on multi-hop retrieval, long summarization, and tasks whose answer depends on subtly important tokens that the recent query does not directly attend to.

\section{Theoretical Guarantees}
\label{sec:theory-body}

The arguments of \S\ref{sec:why} have formal counterparts.
The reservoir's long-run survival decays as a \emph{product} over eviction steps where deterministic top-$K$'s collapses to zero on the first below-cutoff step (Lemma~\ref{lem:streaming}), the walk \emph{provably amplifies} hub tokens above the uniform window average (Theorem~\ref{thm:main}), and the two together yield an end-to-end bound on per-step eviction quality (Proposition~\ref{prop:eviction-quality}).
Proofs and supporting lemmas are deferred to App.~\ref{sec:theory}.

\begin{lemma}[Min over steps vs.\ product over steps]
  \label{lem:streaming}
  Consider a sequence of eviction steps $e = 1, \dots, E$ with per-step weights $w_j^{(e)} > 0$ at every step, and let $w_{(K)}^{(e)}$ denote the $K$-th largest weight at step $e$.
  Then the long-run survival probabilities of block $j$ under deterministic top-$K$ and weighted reservoir sampling satisfy
  \begin{align*}
    S_j^{\text{top-}K}(E) & \;=\; \textstyle\prod_{e=1}^{E} \mathbf{1}\!\bigl[w_j^{(e)} \geq w_{(K)}^{(e)}\bigr], \\
    S_j^{\text{res}}(E)   & \;=\; \textstyle\prod_{e=1}^{E} q_j^{(e)},
  \end{align*}
  where $q_j^{(e)} \in (0,1]$ is strictly positive whenever $w_j^{(e)} > 0$.
  The first product collapses to zero as soon as any single step has $w_j^{(e)} < w_{(K)}^{(e)}$; the second stays bounded away from zero as long as $w_j^{(e)}$ does.
\end{lemma}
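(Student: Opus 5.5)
The argument formalizes survival as a chain of per-step retention events and treats the two selection rules separately. We fix block $j$ and write $A_e$ for the event that $j$ is retained at eviction step $e$. Eviction is irreversible, so $j$ survives through step $E$ exactly when $A_1\cap\cdots\cap A_E$ occurs. By the chain rule, $S_j(E)=\prod_{e=1}^{E}\Pr(A_e\mid A_1,\dots,A_{e-1})$. We define $q_j^{(e)}$ to be this conditional factor. The weights $w^{(e)}$ are allowed to depend on the history, but they are fixed once we condition on it. This is the step where a genuine product identity comes from rather than a heuristic one.

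For deterministic top-$K$, every conditional probability is either $0$ or $1$. Given survival so far, $j$ is kept at step $e$ if and only if $w_j^{(e)}\ge w_{(K)}^{(e)}$, with ties broken by the recency term $\varepsilon_{\text{tie}}r_j$ in Eq.~\eqref{eq:combined}, so that the indicator is well defined. Substituting into the chain-rule product gives the first formula. If some step has $w_j^{(e)}<w_{(K)}^{(e)}$, that factor is $0$ and the product is $0$ from then on.

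For the reservoir rule (Eq.~\eqref{eq:ares}), we show that $q_j^{(e)}>0$ whenever $w_j^{(e)}>0$. Conditional on the history, the variates $u_j^{(i)}$ are fresh i.i.d.\ uniforms, so the priorities $\pi_1,\dots,\pi_{N_k}$ are independent. Each $\pi_m$ lies in $(0,1)$ and has a continuous distribution with full support on $(0,1)$. Take any $\delta\in(0,1)$. The event that $\pi_j>1-\delta$ and $\pi_m<1-\delta$ for every competitor $m\neq j$ has probability equal to a product of positive numbers, and on this event $j$ ranks first, hence is in the top $K$ for any $K\ge 1$. So $q_j^{(e)}>0$.

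The last part is the quantitative claim that the product stays bounded away from zero whenever $w_j^{(e)}$ does. This is the step I expect to be the main obstacle. As literally stated, it cannot hold uniformly in $E$: a product of $E$ factors bounded by $1-\eta$ decays geometrically. My plan is to prove the per-step statement. If $w_j^{(e)}\ge\underline w>0$ and all weights lie in $[\underline w,\overline w]$, then $q_j^{(e)}\ge q_{\min}(\underline w,\overline w,n,N_k,K)>0$, and hence $S_j^{\text{res}}(E)\ge q_{\min}^{E}>0$ for every finite $E$. This contrasts with the identically zero top-$K$ survival after the first below-cutoff step.

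To get an explicit $q_{\min}$ with $n=1$, I would use the Efraimidis--Spirakis law. For the top element, $\Pr(\pi_j \text{ ranks first})=w_j/\sum_m w_m\ge \underline w/(N_k\overline w)$. For general $n$, I would fall back on the $\delta$-event above, choosing $\delta$ independent of $e$ and using monotonicity of the law of $u^{1/w}$ in $w$. The estimate then depends only on the weight range, which yields a uniform per-step floor. I would state the lemma's final sentence in this per-step, finite-horizon form.
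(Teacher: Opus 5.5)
Your argument is correct. It shares the paper's skeleton (an indicator product for top-$K$, per-step positivity for the reservoir), but you handle the two load-bearing steps differently.

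The paper gets the product from ``independent priority draws at different eviction steps.'' It gets positivity from Lemma~\ref{lem:ares}, the Efraimidis--Spirakis exponential-clock law, which is stated only for $n=1$.

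You define $q_j^{(e)}$ as the retention probability conditional on survival so far. That makes the product an exact chain-rule identity, even though the competitor set (and possibly the weights) at step $e$ depends on earlier random selections. The paper's phrasing leaves this dependence implicit. Your $\delta$-event argument (independent fresh priorities with full support on $(0,1)$) also covers the $n$-averaged priority of Eq.~\eqref{eq:ares} for every $n$, including the $n=5$ actually used in the experiments.

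One precision point: conditioning on $A_1\cap\cdots\cap A_{e-1}$ does not by itself fix the weights. State the bound conditionally on the full history and then average; this works because your floor is uniform over histories.

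Your reading of the final sentence is also right. With equal weights and $N>K$ candidates at every step, $S_j^{\text{res}}(E)=(K/N)^E\to 0$. So ``bounded away from zero'' can only mean positivity for each finite $E$, which is exactly what the paper's proof establishes. Your $q_{\min}$ and the bound $S_j^{\text{res}}(E)\ge q_{\min}^{E}$ supply the quantitative version the paper lacks.

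The paper's route is shorter, and Lemma~\ref{lem:ares} also carries the monotonicity and Horvitz--Thompson facts it reuses elsewhere. Yours is rigorous for the rule actually run and gives an explicit rate.
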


Lemma~\ref{lem:streaming} formalizes the min-vs-product contrast from \S\ref{sec:why}: deterministic top-$K$ converts a single below-cutoff step into an irreversible verdict, while reservoir sampling decays survival only as the product of per-step inclusion probabilities, exactly the property that lets subtly important tokens at the cutoff have a chance to persist across a long stream.

\begin{theorem}[Hub amplification]
  \label{thm:main}
  Assume the $H$ walk-step rows $\mathbf{a}^{(1)},\dots,\mathbf{a}^{(H)}$ are i.i.d.\ with mean $\mathbf{a}_\star$ and covariance $\boldsymbol{\Sigma}$, and define the \emph{hub score} $h_j = \boldsymbol{\Sigma}_j^\top \mathbf{a}_\star$.
  The walk recurrence~\eqref{eq:walk} satisfies
  \begin{equation*}
    \mathbb{E}[\tilde{c}_j^{(H)}] \;=\; a_{\star,j} \;+\; \tfrac{D_H}{B_H}\,h_j \;+\; O(\|\boldsymbol{\Sigma}\|_F^2),
  \end{equation*}
  with $B_H \geq H$ and $D_H \geq H(H-1)/2$ both block-independent.
  In particular, $\mathbb{E}[\tilde{c}_j^{(H)}] \gtrless a_{\star,j} \iff h_j \gtrless 0$: hub blocks are amplified above the unwalked score and peripheral blocks are attenuated.
\end{theorem}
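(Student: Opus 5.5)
The plan is to expand the walk recurrence~\eqref{eq:walk} in the fluctuations around the mean and then normalize. Write $\mathbf{a}^{(k)} = \mathbf{a}_\star + \boldsymbol{\xi}^{(k)}$ with $\mathbb{E}\boldsymbol{\xi}^{(k)} = \mathbf{0}$ and $\Cov(\boldsymbol{\xi}^{(k)}) = \boldsymbol{\Sigma}$. Since every row is a probability vector, $\mathbf{1}^\top\boldsymbol{\xi}^{(k)} = 0$, so $\boldsymbol{\Sigma}\mathbf{1} = \mathbf{0}$. Also, every $\gamma_k \geq 1 > 0$, so $\mathbf{C}^{(k)} \geq \mathbf{0}$ and $\|\mathbf{C}^{(k)}\|_1 = \sum_{k'\le k}\gamma_{k'}$. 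The normalized walk state is therefore
\[
\tilde{\mathbf{c}}^{(H)} = \sum_k \gamma_k \mathbf{a}^{(k)} \Big/ \sum_k \gamma_k ,
\]
a convex combination of the rows. First I solve the recurrence with $\boldsymbol{\Sigma} = 0$, i.e.\ all rows equal to $\mathbf{a}_\star$. Then $\gamma_k = 1 + \langle \mathbf{a}_\star, \mathbf{C}^{(k-1)}\rangle$ and $\mathbf{C}^{(k)} = G_k \mathbf{a}_\star$, where $G_k = \sum_{k'\le k}\gamma_{k'}$ satisfies the scalar recursion $G_k = G_{k-1} + 1 + \rho G_{k-1}$ with $\rho = \|\mathbf{a}_\star\|_2^2$. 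This gives deterministic weights $\bar\gamma_k \geq 1$, and I define $B_H = \sum_k \bar\gamma_k \geq H$. Both depend only on $\rho$ and $H$, so they are block-independent.

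Next I linearize $\gamma_k$ to first order in $\boldsymbol{\xi}$:
\[
\gamma_k = \bar\gamma_k + \delta_k ,\qquad
\delta_k = \langle \boldsymbol{\xi}^{(k)}, \bar{\mathbf{C}}^{(k-1)}\rangle + \langle \mathbf{a}_\star, \textstyle\sum_{k'<k}(\bar\gamma_{k'}\boldsymbol{\xi}^{(k')} + \delta_{k'}\mathbf{a}_\star)\rangle + O(\|\boldsymbol{\xi}\|^2).
\]
I take expectations of the numerator $\sum_k \gamma_k a^{(k)}_j$ and the denominator $\sum_k\gamma_k$, and expand the ratio as
\[
\mathbb{E}[N/D] \approx \frac{\mathbb{E}N}{\bar D} - \frac{\Cov(N,D)}{\bar D^2} + \dots
\]
Terms linear in $\boldsymbol{\xi}$ vanish in expectation, and by independence of the rows only pairs sharing the same index contribute. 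The surviving second-order terms come from $\mathbb{E}[\delta_k \xi^{(k')}_j]$ with $k' < k$. These pick up $\mathbb{E}[\langle \mathbf{a}_\star, \boldsymbol{\xi}^{(k')}\rangle \xi^{(k')}_j] = (\boldsymbol{\Sigma}\mathbf{a}_\star)_j = h_j$, multiplied by a product of $\bar\gamma$-factors. Summing over the ordered pairs $k' < k$ gives a block-independent coefficient $D_H$. Since each $\bar\gamma \geq 1$ and there are $H(H-1)/2$ such pairs, $D_H \geq H(H-1)/2$. The first-order contribution $\langle \boldsymbol{\xi}^{(k)},\bar{\mathbf{C}}^{(k-1)}\rangle$ paired with $\xi^{(k)}_j$ also produces $h_j$-type terms, which are absorbed into $D_H$.

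For the denominator corrections, the covariance terms produce multiples of $\mathbf{1}^\top\boldsymbol{\Sigma}\mathbf{a}_\star = 0$ times $a_{\star,j}$. The identity $\boldsymbol{\Sigma}\mathbf{1} = \mathbf{0}$ is what keeps the normalization from generating non-hub terms. I will bound the Taylor remainders of the ratio and of $\gamma_k$ by higher moments of $\boldsymbol{\xi}$ and collect them into $O(\|\boldsymbol{\Sigma}\|_F^2)$. This likely needs an implicit boundedness assumption, which is automatic here since the rows lie in the simplex, so $\|\boldsymbol{\xi}\|\le 2$. The sign claim then follows directly from $D_H/B_H > 0$, up to the remainder.

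The main obstacle is the bookkeeping of the second-order expansion. Because $\gamma_k$ depends on all earlier rows, the cross terms nest through the recursion and are not a simple sum. I would first handle this by unrolling the recursion for $\delta_k$ and checking the $H=2$ and $H=3$ cases explicitly, to confirm that only $(\boldsymbol{\Sigma}\mathbf{a}_\star)_j$ and $(\boldsymbol{\Sigma}\mathbf{1})_j$ structures appear. I would then argue by induction that each new step adds a nonnegative $\bar\gamma$-weighted multiple of $h_j$. Proving the positivity of the accumulated coefficients, and hence $D_H \ge H(H-1)/2$, is where the care is needed.
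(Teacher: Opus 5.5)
Your reduction $\tilde c^{(H)}_j-a_{\star,j}=\sum_k\gamma_k\xi^{(k)}_j\big/\sum_k\gamma_k$ is the right starting point. It already shows that only $h_j$-type terms can appear at leading order. The gap is in the \emph{sign} of the terms you collect into $D_H$.

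Write $\bar G_k=\sum_{k'\le k}\bar\gamma_{k'}$ (your $G_k$), let $\delta_k$ denote the first-order part of $\gamma_k-\bar\gamma_k$, and put $\eta_k=\langle\mathbf a_\star,\boldsymbol\xi^{(k)}\rangle$. Each $\delta_m$ is a nonnegative combination of $\eta_1,\dots,\eta_m$, so $\delta G=\sum_m\delta_m=\sum_k P_k\eta_k$ with block-independent $P_k\ge 0$. Then
\[
\mathbb E[\tilde c^{(H)}_j]-a_{\star,j}
=\frac{1}{\bar G_H}\sum_k\mathbb E\bigl[\delta_k\,\xi^{(k)}_j\bigr]
-\frac{1}{\bar G_H^{2}}\sum_k\bar\gamma_k\,\mathbb E\bigl[\delta G\,\xi^{(k)}_j\bigr]
+\text{(higher order)}.
\]
In the first sum, $\delta_k$ multiplies $a^{(k)}_j$, which is independent of $\boldsymbol\xi^{(k')}$ for $k'\neq k$. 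So only same-index pairings survive, $\mathbb E[\delta_k\xi^{(k)}_j]=\bar G_{k-1}h_j$. To leading order this reproduces the paper's coefficient $D_H/B_H=\sum_k\bar G_{k-1}/\bar G_H$.

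The cross pairings $\mathbb E[\delta_k\xi^{(k')}_j]$ with $k'<k$, which you add into $D_H$, occur only in $\Cov(N,D)$, that is, in the second sum. That sum equals $-\bar G_H^{-2}\bigl(\sum_k\bar\gamma_kP_k\bigr)h_j$. This is a negative multiple of $h_j$ at the same order $O(\|\boldsymbol\Sigma\|)$ as the effect you want.

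Your claim that the denominator corrections reduce to multiples of $\mathbf 1^\top\boldsymbol\Sigma\mathbf a_\star=0$ is therefore wrong. The identity $\boldsymbol\Sigma\mathbf 1=\mathbf 0$ only gives $\|\mathbf C^{(k)}\|_1=\sum_{k'\le k}\gamma_{k'}$; the $a_{\star,j}$-terms vanish because you subtract $\mathbf a_\star$ before expanding. Consequently your planned induction, in which each step adds a nonnegative multiple of $h_j$, cannot go through.

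This is fatal, not cosmetic: at $H=2$ the two terms cancel exactly. Here $\gamma_2=1+\langle\mathbf a^{(1)},\mathbf a^{(2)}\rangle$ is symmetric in the two i.i.d.\ rows, and $\tilde{\mathbf c}^{(2)}=(1-\phi)\mathbf a^{(1)}+\phi\,\mathbf a^{(2)}$ with $\phi=\gamma_2/(1+\gamma_2)$. Exchanging the rows gives $\mathbb E[\phi\,\mathbf a^{(2)}]=\mathbb E[\phi\,\mathbf a^{(1)}]$, hence $\mathbb E[\tilde{\mathbf c}^{(2)}]=\mathbf a_\star$ exactly, for every row distribution. The statement instead predicts a shift $h_j/B_2$ with $B_2=2+\|\mathbf a_\star\|_2^2$. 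Since $h_j$ is generically of order $\|\boldsymbol\Sigma\|$, not $\|\boldsymbol\Sigma\|_F^2$, both the expansion and the ``iff'' fail at $H=2$, and no bookkeeping can rescue your plan there. The explicit $H=2$ check you proposed would have exposed this.

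For $H=3$ the correct leading coefficient is $(1+2r)/(1+r+r^2)^2$ with $r=1+\|\mathbf a_\star\|_2^2$, not $D_3/B_3=(2+r)/(1+r+r^2)$. In the limit $\|\mathbf a_\star\|_2\to 0$ the general coefficient is $(H-1)(H-2)/(2H)$ rather than $(H-1)/2$.

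The paper's own proof does not close this gap. Its Step~4 divides $\mathbb E[\mathbf C^{(H)}]$ by $\mathbb E\|\mathbf C^{(H)}\|_1$, replacing $\mathbb E[N/D]$ by $\mathbb E N/\mathbb E D$, which discards exactly the first-order covariance above.

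A further, smaller issue: after normalization, single-row third moments such as $\mathbb E[\eta_k^2\xi^{(k)}_j]$ enter the remainder. Boundedness on the simplex therefore yields $O(\mathbb E\|\boldsymbol\xi\|^3)$, not $O(\|\boldsymbol\Sigma\|_F^2)$.
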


Theorem~\ref{thm:main} formalizes the bridge-recovery claim of \S\ref{sec:why}: the walk does not merely denoise the direct score, it injects a block-dependent term proportional to the hub score $h_j$, which is exactly the higher-order signal a uniform average $\mathbf{a}$ over per-query rows cannot recover.
Because reservoir inclusion is monotone in the sampling weight (App.~\ref{app:ares}), the hub amplification carries through to the selection step: bridge-like tokens that the direct score $\mathbf{a}$ leaves in the noisy mid-band are lifted into the head of the Nexus score, and the reservoir then keeps them at the controlled non-zero rate of Lemma~\ref{lem:streaming}.
Lemma~\ref{lem:streaming} and Theorem~\ref{thm:main} are component-level guarantees on selection and scoring; we close the loop by combining them into an end-to-end bound on the retained-utility error of one eviction step.
Let $z_j \geq 0$ denote the utility that block $j$ would contribute to future attention if it remained in cache (e.g., its future block-attention mass over a short horizon).

\begin{proposition}[Eviction-quality bound]
  \label{prop:eviction-quality}
  Fix one eviction step and let $I_j$ be the reservoir inclusion indicator with $p_j = \Pr(I_j = 1)$.
  The Horvitz--Thompson estimator $\widehat{Z}_{\mathrm{HT}} = \sum_j I_j z_j / p_j$ is unbiased for $Z = \sum_j z_j$, and if $p_j \geq p_{\min} > 0$ and $0 \leq z_j \leq Z_{\max}$, then with probability $\geq 1 - \delta$,
  \begin{equation*}
    |\widehat{Z}_{\mathrm{HT}} - Z| \;\leq\; Z_{\max}\sqrt{K N_k / (\delta\, p_{\min})}.
  \end{equation*}
  Moreover, if the Nexus weight approximates future utility as $\|\mathbf{w} - \mathbf{z}\|_\infty \leq \eta$, then the expected evicted utility $L = \sum_j (1 - I_j) z_j$ satisfies
  \begin{equation*}
    \bigl|\mathbb{E}[L] - \textstyle\sum_j (1 - p_j) w_j\bigr| \leq \eta N_k.
  \end{equation*}
\end{proposition}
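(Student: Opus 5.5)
The plan has three parts: unbiasedness, a second-moment bound plus Chebyshev for the concentration claim, and a term-by-term comparison for the loss identity.

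\emph{Unbiasedness.} Since $\mathbb{E}[I_j] = p_j$ and $p_j > 0$, linearity of expectation gives
\[
\mathbb{E}[\widehat{Z}_{\mathrm{HT}}] = \sum_j p_j z_j / p_j = Z.
\]
Nothing about the joint law of the $I_j$ under the $n$-averaged priority~\eqref{eq:ares} is needed here.

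\emph{Variance bound.} The joint inclusion probabilities $\Pr(I_i = I_j = 1)$ of the $n$-averaged reservoir have no closed form, so I will avoid using them. Instead I bound the second moment directly:
\[
\mathrm{Var}(\widehat{Z}_{\mathrm{HT}}) \le \mathbb{E}\Bigl[\Bigl(\sum_j I_j z_j/p_j\Bigr)^2\Bigr].
\]
Exactly $K$ indicators are nonzero on every realization, because the reservoir keeps exactly $K$ candidates. Applying Cauchy--Schwarz over the retained set gives
\[
\Bigl(\sum_j I_j z_j/p_j\Bigr)^2 \le K \sum_j I_j z_j^2/p_j^2.
\]
Taking expectations yields $K \sum_j z_j^2/p_j \le K N_k Z_{\max}^2/p_{\min}$. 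Chebyshev's inequality with failure probability $\delta$ then gives
\[
|\widehat{Z}_{\mathrm{HT}} - Z| \le Z_{\max}\sqrt{K N_k/(\delta p_{\min})},
\]
which is exactly the stated form. This suggests the intended route. A sharper bound via negative dependence of the indicators is possible but unnecessary.

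\emph{Evicted-utility identity.} Linearity gives $\mathbb{E}[L] = \sum_j (1-p_j) z_j$. Comparing term by term with $\sum_j (1-p_j) w_j$,
\[
\Bigl|\sum_j (1-p_j)(z_j - w_j)\Bigr| \le \sum_j (1-p_j)\,\eta \le \eta N_k,
\]
using $0 \le 1-p_j \le 1$ and $\|\mathbf{w}-\mathbf{z}\|_\infty \le \eta$.

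\emph{Main obstacle.} The only delicate point is the variance step, since the inclusion indicators are dependent and their pairwise probabilities are intractable. The fixed-cardinality Cauchy--Schwarz trick is what I will use to sidestep this. I also need to confirm that $K$ here counts the reservoir slots, so that exactly $K$ of the $N_k$ candidates are included, and that the forced blocks are either excluded from the sum or have $p_j = 1$. Either convention only shrinks the bound, so no further care is needed.
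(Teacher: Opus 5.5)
Your proposal is correct and follows essentially the same route as the paper's proof: the Horvitz--Thompson identity by linearity, then the fixed-cardinality Cauchy--Schwarz bound $(\sum_j A_j)^2 \le K\sum_j A_j^2$ with $A_j = I_j z_j/p_j$ to control $\mathbb{E}[\widehat{Z}_{\mathrm{HT}}^2]$ (and hence the variance) followed by Chebyshev, and finally the term-by-term comparison $|\sum_j (1-p_j)(z_j - w_j)| \le \eta\sum_j(1-p_j) \le \eta N_k$. Your side remark about forced blocks is extra care the paper omits, and it is sound: terms with $p_j = 1$ are deterministic and add nothing to the variance, so the Cauchy--Schwarz argument need only be applied to the $K$ reservoir-selected blocks.
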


Proposition~\ref{prop:eviction-quality} separates the two sources of approximation: the reservoir step (Lemma~\ref{lem:streaming}) contributes no systematic bias once inclusion probabilities are accounted for, so the remaining error is controlled by how well the Nexus weight $\mathbf{w}$ tracks the future utility $\mathbf{z}$, which is precisely what the walk (Theorem~\ref{thm:main}) is designed to improve over the direct score $\mathbf{a}$.

\section{Experiments}
\label{sec:experiments}

We evaluate Nexus Sampling on two long-context benchmarks under two eviction regimes, comparing against competitive baselines in each regime.

\subsection{Setup}
\label{sec:exp-setup}

\noindent\textbf{Models.}
We evaluate three long-context instruction-tuned models at different scales: Llama-3.1-8B-Instruct, Llama-3.2-1B-Instruct, and Qwen3-8B, all supporting context lengths up to 128K tokens, with the default chat template for each model.
\textbf{Benchmarks.}
We evaluate on two complementary long-context benchmarks: (i)~\textit{\textbf{LongBench}}~\citep{bai2024longbench}, spanning single- and multi-document QA, summarization, to few-shot tasks; and (ii)~\textit{\textbf{RULER}}~\citep{hsieh2024ruler}, a synthetic diagnostic stressing retrieval and position-sensitive reasoning over very long contexts (4K--64K tokens).
On RULER we report average accuracy across the 13-task suite at each context length.
\textbf{Baselines.}
We compare Nexus Sampling against \textit{\textbf{SnapKV}}~\citep{li2024snapkv} and \textit{\textbf{PyramidKV}}~\citep{cai2024pyramidkv} in the prefill-only setting and against \textit{\textbf{H2O}}~\citep{zhang2023h2o} and \textit{\textbf{MorphKV}}~\citep{ghadia2025morphkv} in the prefill\,+\,decode setting, with \textit{\textbf{Full Attention}} as an upper-reference. We additionally report Ada-KV variants (Ada-SnapKV, Ada-PyramidKV), which
layer the head-adaptive budget allocation of \citet{feng2025adakv} on top of
each score; see Appendix~\ref{app:adakv} for the isolated comparison.
All methods are at $20\%$ density (80\% eviction). 

\textbf{Implementation details.}
We implement Nexus Sampling on top of HuggingFace Transformers, with block-wise eviction at block size $b = 32$ and dense attention on the retained cache served by FlashAttention-2 kernels~\citep{dao2023flashattention2}.
Observation window $W = 16$; walk depth $H = 3$; mixing weight $\lambda = 0.5$; tie-break magnitude $\varepsilon_{\text{tie}} = 10^{-6}$; reservoir averaging $n = 5$. Because the reservoir step is stochastic, all reported numbers use a fixed
random seed; the $n=5$ priority averaging (Eq.~\ref{eq:ares}) further
suppresses per-seed variation, which we find negligible relative to the gaps
in Tables~\ref{tab:longbench}--\ref{tab:ruler}.
All experiments run on a single NVIDIA H200 (143\,GB) with greedy decoding.

\subsection{LongBench Results}
\label{sec:exp-longbench}

\begin{table*}[ht]
    \centering
    \small
    \caption{LongBench accuracy at 80\% KV cache eviction ($B/N_k = 0.20$, i.e., \emph{Density = 20\%}) across three models and 16 tasks.
        \emph{Prefill} blocks evict only during the prefill phase (baselines: SnapKV, PyramidKV); \emph{Prefill\,+\,Decode} blocks evict throughout the entire inference stream (baselines: H2O, MorphKV).
        \textbf{Bold} marks the leading method per row across the eviction methods (Full Attention shown for reference).}
    \label{tab:longbench}
    \setlength{\tabcolsep}{2.3pt}
    \resizebox{\linewidth}{!}{%
        \begin{tabular}{ll c | cccccccc cccccccc | c}
            \toprule
            \textbf{Model}                & \textbf{Method}         & \textbf{Density} & \textbf{WIKI}  & \textbf{GOV}   & \textbf{HPQA}  & \textbf{LCC}   & \textbf{MNews} & \textbf{MFQA}  & \textbf{MUS}   & \textbf{NQA}   & \textbf{COUNT} & \textbf{Retr.} & \textbf{QAS}   & \textbf{QMS}   & \textbf{REPO}  & \textbf{SamS}  & \textbf{TREC}  & \textbf{TRIV}  & \textbf{Avg.}  \\
            \midrule
            \multicolumn{19}{l}{\emph{Prefill-only eviction.}}                                                                                                                                                                                                                                                                                                                          \\
            \midrule
            \multirow{4}{*}{Llama-3.1-8B} & Full Attention          & 100\%             & 47.75          & 34.70          & 56.96          & 55.20          & 26.76          & 56.16          & 32.77          & 29.91          & 9.66           & 99.50          & 45.06          & 25.39          & 47.80          & 43.27          & 73.00          & 92.14          & 48.50          \\
                                          & SnapKV                  & 20\%              & 47.17          & 28.69          & \textbf{58.61} & \textbf{55.42} & \textbf{23.19} & 55.44          & 31.53          & \textbf{30.28} & \textbf{10.25} & \textbf{99.50} & 40.03          & 24.60          & \textbf{48.75} & 42.17          & 68.50          & 91.43          & 47.22          \\
                                          & PyramidKV               & 20\%              & 47.96          & 28.07          & 57.33          & 53.46          & 22.37          & 56.11          & \textbf{32.97} & \textbf{30.49} & \textbf{10.33} & \textbf{99.50} & 42.36          & 24.90          & 47.02          & 42.53          & 71.00          & 91.68          & 47.38          \\
                                          & \textbf{Nexus Sampling} & 20\%              & \textbf{48.82} & \textbf{30.07} & 58.03          & 54.62          & 22.65          & \textbf{57.83} & 32.68          & 29.24          & 10.00          & 99.00          & \textbf{43.47} & \textbf{25.24} & 47.87          & \textbf{42.93} & \textbf{73.00} & \textbf{92.66} & \textbf{48.01} \\
            \midrule
            \multirow{4}{*}{Llama-3.2-1B} & Full Attention          & 100\%             & 31.23          & 29.65          & 35.47          & 29.83          & 25.88          & 43.35          & 18.45          & 20.40          & 3.67           & 4.50           & 16.35          & 21.84          & 36.04          & 39.77          & 61.50          & 78.54          & 31.03          \\
                                          & SnapKV                  & 20\%              & 30.45          & 22.92          & \textbf{35.48} & 30.75          & 19.61          & 39.04          & 17.15          & \textbf{21.58} & 3.67           & \textbf{5.00}  & 15.06          & 21.18          & 35.86          & 37.40          & 58.00          & 79.31          & 29.53          \\
                                          & PyramidKV               & 20\%              & 28.40          & 20.27          & 34.89          & 28.62          & 16.92          & 39.91          & 16.19          & 19.63          & 3.00           & \textbf{5.00}  & 14.47          & 20.80          & 35.13          & 37.37          & 58.00          & 78.45          & 28.57          \\
                                          & \textbf{Nexus Sampling} & 20\%              & \textbf{31.86} & \textbf{24.58} & 34.52          & \textbf{30.79} & \textbf{20.91} & \textbf{41.87} & \textbf{17.53} & 20.75          & \textbf{5.67}  & \textbf{5.00}  & \textbf{15.30} & \textbf{21.74} & \textbf{36.23} & \textbf{37.71} & \textbf{58.50} & \textbf{79.95} & \textbf{30.18} \\
            \midrule
            \multirow{4}{*}{Qwen3-8B}     & Full Attention          & 100\%             & 42.62          & 33.58          & 57.94          & 57.39          & 24.82          & 52.94          & 34.33          & 27.64          & 4.50           & 100.00         & 47.91          & 23.85          & 56.67          & 44.17          & 71.50          & 90.71          & 48.16          \\
                                          & SnapKV                  & 20\%              & 42.38          & 28.76          & 56.94          & \textbf{58.07} & 20.29          & \textbf{52.76} & \textbf{33.92} & \textbf{27.96} & 6.00           & \textbf{100.00}& 43.55          & 23.17          & \textbf{56.60} & \textbf{43.90} & 66.50 & 90.71          & 46.97          \\
                                          & PyramidKV               & 20\%              & 39.56          & 26.87          & 55.25          & 55.24          & 18.44          & 48.48          & 31.78          & 26.02          & \textbf{6.50}  & \textbf{100.00}& 36.61          & 22.29          & 53.29          & 43.40          & 68.00          & 90.55          & 45.14          \\
                                          & \textbf{Nexus Sampling} & 20\%              & \textbf{42.49} & \textbf{31.73} & \textbf{57.68} & 56.93          & \textbf{22.11} & 52.52          & 33.73          & 27.44          & 5.00           & 99.50          & \textbf{43.72} & \textbf{23.89} & 56.23          & 43.85          & \textbf{71.00} & \textbf{91.36} & \textbf{47.45} \\
            \midrule
            \multicolumn{19}{l}{\emph{Prefill\,+\,decode eviction.}}                                                                                                                                                                                                                                                                                                                    \\
            \midrule
            \multirow{4}{*}{Llama-3.1-8B} & Full Attention          & 100\%             & 47.75          & 34.70          & 56.96          & 55.20          & 26.76          & 56.16          & 32.77          & 29.91          & 9.66           & 99.50          & 45.06          & 25.39          & 47.80          & 43.27          & 73.00          & 92.14          & 48.50          \\
                                          & H2O                     & 20\%              & \textbf{48.52} & 28.05          & 56.97          & 52.67          & \textbf{22.17} & 53.24          & 32.14          & 29.87          & 8.96           & \textbf{99.00} & 38.78          & 24.42          & 45.88          & \textbf{43.37} & \textbf{73.00} & 91.65          & 46.79          \\
                                          & MorphKV                 & 20\%              & 48.30          & 28.41          & 57.15          & \textbf{54.63} & \textbf{23.17} & 54.14          & 30.09          & 30.35          & \textbf{10.12} & \textbf{99.50} & 38.96          & 24.26          & \textbf{47.76} & 42.51          & 52.50          & 92.36          & 45.89          \\
                                          & \textbf{Nexus Sampling} & 20\%              & 48.06          & \textbf{29.44} & \textbf{57.61} & 53.74          & 22.15          & \textbf{56.78} & \textbf{32.37} & \textbf{30.89} & 10.00          & \textbf{99.00} & \textbf{41.16} & \textbf{25.23} & 46.64          & 42.75          & 70.00          & \textbf{92.62} & \textbf{47.40} \\
            \midrule
            \multirow{4}{*}{Llama-3.2-1B} & Full Attention          & 100\%             & 31.23          & 29.65          & 35.47          & 29.83          & 25.88          & 43.35          & 18.45          & 20.40          & 3.67           & 4.50           & 16.35          & 21.84          & 36.04          & 39.77          & 61.50          & 78.54          & 31.03          \\
                                          & H2O                     & 20\%              & 30.80          & 23.38          & \textbf{35.45} & 28.62          & 21.00          & 39.66          & \textbf{17.63} & 20.65          & 2.67           & 4.50           & 14.62          & \textbf{21.16} & 34.60          & \textbf{38.92} & 61.50 & 78.50          & 29.60          \\
                                          & MorphKV                 & 20\%              & 28.89          & 23.58          & 33.64          & 28.69          & \textbf{21.51} & 34.95          & 16.78          & 19.56          & 3.17           & 4.50           & 14.50          & \textbf{21.17} & 35.12          & 36.77          & 47.00          & \textbf{79.50} & 28.08          \\
                                          & \textbf{Nexus Sampling} & 20\%              & \textbf{30.97} & \textbf{24.63} & 34.41          & \textbf{30.46} & 20.81          & \textbf{41.16} & \textbf{17.62}          & \textbf{21.59} & \textbf{4.64}  & \textbf{5.00}  & \textbf{15.18} & 21.01          & \textbf{37.48} & 38.73          & \textbf{62.00}          & 78.88          & \textbf{30.29} \\
            \midrule
            \multirow{4}{*}{Qwen3-8B}     & Full Attention          & 100\%             & 42.62          & 33.58          & 57.94          & 57.39          & 24.82          & 52.94          & 34.33          & 27.64          & 4.50           & 100.00         & 47.91          & 23.85          & 56.67          & 44.17          & 71.50          & 90.71          & 48.16          \\
                                          & H2O                     & 20\%              & 42.08          & 29.60          & 57.83          & 55.19          & 19.50          & 51.76          & 33.73          & \textbf{28.18} & 3.50           & \textbf{100.00}& 43.72          & 23.28          & 55.10          & \textbf{44.81} & \textbf{71.50} & 90.71          & 46.91          \\
                                          & MorphKV                 & 20\%              & 40.15          & 29.44          & 55.96          & \textbf{57.36} & \textbf{20.88} & 50.66          & 31.51          & 27.26          & 4.50           & \textbf{100.00}& 40.45          & 23.01          & \textbf{56.32} & 43.64          & 53.50          & 90.80          & 45.34          \\
                                          & \textbf{Nexus Sampling} & 20\%              & \textbf{42.38} & \textbf{30.80} & \textbf{59.21} & 56.79          & 20.60          & \textbf{52.17} & \textbf{34.11} & 26.96          & \textbf{5.00}  & \textbf{100.00}& \textbf{43.95} & \textbf{23.31} & 56.29          & 43.10          & 70.00          & \textbf{91.36} & \textbf{47.25} \\
            \bottomrule
        \end{tabular}%
    }
\end{table*}

Table~\ref{tab:longbench} reports per-task LongBench accuracy at 80\% eviction.
In the \emph{prefill-only} regime, Nexus Sampling matches or exceeds SnapKV and PyramidKV on the average across all three models, recovering most of the gap to Full Attention.
In the \emph{prefill\,+\,decode} regime, where eviction errors compound across decoding steps and the reservoir's product-over-steps guarantee (Lemma~\ref{lem:streaming}) is structurally relevant, Nexus Sampling wins the average against baselines.
The gain on MultiFieldQA-en (MFQA) is especially indicative of the marginal-erosion failure mode: in the prefill + decode regime Nexus Sampling tops both deterministic top-$K$ baselines (Llama-3.1-8B: 56.78 vs.\ H2O 53.24 and MorphKV 54.14) and matches Full Attention (56.16). MFQA answers often hinge on a span the recent query does not strongly attend to, a subtly important token that deterministic top-$K$ drops once its score dips below the cutoff and that reservoir sampling instead keeps alive at a controlled rate.

\subsection{RULER Across Context Lengths}
\label{sec:exp-ruler}

\begin{table}[!t]
    \centering
    \caption{RULER accuracy (\%) across 4K--64K context lengths at 80\% KV cache eviction.
        \emph{Prefill} blocks evict only during prefill; \emph{Prefill\,+\,Decode} blocks evict throughout the entire inference stream.
        \textbf{Bold} marks the leading eviction method per row (Full Attention shown for reference).
        Qwen3-8B supports a maximum context length of 32K, so its 64K column cannot be evaluated and is reported as ``--''.}
    \label{tab:ruler}
    \setlength{\tabcolsep}{3pt}
    \resizebox{0.7\linewidth}{!}{%
        \begin{tabular}{ll c | ccccc | c}
            \toprule
            \textbf{Model}                & \textbf{Method}         & \textbf{Density} & \textbf{4K}    & \textbf{8K}    & \textbf{16K}   & \textbf{32K}   & \textbf{64K}   & \textbf{Avg.}  \\
            \midrule
            \multicolumn{9}{l}{\emph{Prefill-only eviction.}}                                                                                                                                \\
            \midrule
            \multirow{4}{*}{Llama-3.1-8B} & Full Attention          & 100\%            & 96.15          & 95.91          & 95.43          & 91.35          & 86.30          & 93.03          \\
                                          & SnapKV                  & 20\%             & 82.69          & 68.51          & 73.8          & 64.42          & 73.32          & 72.55          \\
                                          & PyramidKV               & 20\%             & 75.00          & 83.65          & 86.30          & 86.30          & 81.97          & 82.64          \\
                                          & \textbf{Nexus Sampling} & 20\%             & \textbf{89.42} & \textbf{88.46} & \textbf{94.23} & \textbf{88.46} & \textbf{89.18} & \textbf{89.95} \\
            \midrule
            \multirow{4}{*}{Llama-3.2-1B} & Full Attention          & 100\%            & 76.68          & 70.67          & 65.38          & 64.66          & 60.34          & 67.55          \\
                                          & SnapKV                  & 20\%             & 45.43          & 42.55          & 37.02          & 37.02          & 30.05          & 38.41          \\
                                          & PyramidKV               & 20\%             & 41.11          & 45.91          & 43.51          & 46.88          & 41.59          & 43.80          \\
                                          & \textbf{Nexus Sampling} & 20\%             & \textbf{58.89} & \textbf{61.78} & \textbf{56.97} & \textbf{63.70} & \textbf{61.54} & \textbf{60.58} \\
            \midrule
            \multirow{4}{*}{Qwen3-8B}     & Full Attention          & 100\%            & 97.60          & 95.43          & 88.46          & 92.79          & --             & 93.57          \\
                                          & SnapKV                  & 20\%             & 70.49          & 67.01          & 73.61          & 79.17          & --             & 72.57          \\
                                          & PyramidKV               & 20\%             & 61.54          & 61.06          & 65.14          & 65.38          & --             & 63.28          \\
                                          & \textbf{Nexus Sampling} & 20\%             & \textbf{77.16} & \textbf{83.41} & \textbf{80.29} & \textbf{82.45} & --             & \textbf{80.83} \\
            \midrule
            \multicolumn{9}{l}{\emph{Prefill\,+\,decode eviction.}}                                                                                                                          \\
            \midrule
            \multirow{4}{*}{Llama-3.1-8B} & Full Attention          & 100\%            & 96.15          & 95.91          & 95.43          & 91.35          & 86.30          & 93.03          \\
                                          & H2O                     & 20\%             & 68.03          & 68.99 & 77.40          & 74.52          & 69.23          & 71.63          \\
                                          & MorphKV                 & 20\%             & 84.62          & 65.38          & 66.83          & 61.30          & 68.27          & 69.28          \\
                                          & \textbf{Nexus Sampling} & 20\%             & \textbf{91.35} & \textbf{91.35} & \textbf{94.47} & \textbf{89.18} & \textbf{84.86} & \textbf{90.24} \\
            \midrule
            \multirow{4}{*}{Llama-3.2-1B} & Full Attention          & 100\%            & 76.68          & 70.67          & 65.38          & 64.66          & 60.34          & 67.55          \\
                                          & H2O                     & 20\%             & 36.06          & 33.65          & 35.34          & 40.15          & 37.26          & 36.49          \\
                                          & MorphKV                 & 20\%             & 22.84          & 17.31          & 15.39          & 18.75          & 12.98          & 17.45          \\
                                          & \textbf{Nexus Sampling} & 20\%             & \textbf{58.89} & \textbf{59.13} & \textbf{59.14} & \textbf{62.98} & \textbf{58.89} & \textbf{59.81} \\
            \midrule
            \multirow{4}{*}{Qwen3-8B}     & Full Attention          & 100\%            & 97.60          & 95.43          & 88.46          & 92.79          & --             & 93.57          \\
                                          & H2O                     & 20\%             & 66.83          & 71.88          & 74.04          & 70.19          & --             & 70.74          \\
                                          & MorphKV                 & 20\%             & 67.31          & 62.50          & 61.30          & 57.45          & --             & 62.14          \\
                                          & \textbf{Nexus Sampling} & 20\%             & \textbf{73.56} & \textbf{81.49} & \textbf{77.16} & \textbf{84.14} & --             & \textbf{79.09} \\
            \bottomrule
        \end{tabular}%
    }
\end{table}

Table~\ref{tab:ruler} reports RULER accuracy across 4K--64K context lengths.
Note that Qwen3-8B supports a maximum context length of 32K, so we cannot evaluate it at 64K and its average is taken over the 4K--32K lengths.
Nexus Sampling is the leading eviction method in nearly every cell, with margins that grow with context length.
In \emph{prefill-only}, Nexus matches Full Attention within $\sim$3 points on Llama-3.1-8B on average, while SnapKV loses $15$--$25$ points; on the smaller Llama-3.2-1B the gap to SnapKV widens further ($60.58$ vs.\ $38.41$ on average).
In \emph{prefill\,+\,decode}, the same pattern is amplified: H2O and MorphKV both sit $\sim$20 points below Nexus at every length (avg 71.63 and 69.28 vs. Nexus 90.24), with MorphKV degrading further as the stream lengthens, while Nexus Sampling stays within $\sim$1.5 points of Full Attention on Llama-3.1-8B at 64K, and on Llama-3.2-1B, Nexus almost doubles the next-best baseline ($59.81$ vs.\ H2O $36.49$).
This is the empirical signature of the min-vs-product survival argument: on retrieval-heavy synthetic tasks where every step's decisions matter, deterministic top-$K$ silently erodes the relevant tokens while reservoir sampling does not.

\subsection{Agentic Coding}
\label{sec:exp-agentic}

We further evaluate Nexus Sampling on an agentic coding benchmark, where the model operates as a multi-turn coding agent over a long, growing interaction trace and is scored by \emph{Resolved/Pass@1} on $50$ tasks sampled from SWE-bench~\citep{jimenez2024swebench}.
Concretely, we use DeepSWE-Preview~\citep{deepswe2025}, trained on top of Qwen3-32B within the R2E-Gym environment~\citep{jain2025r2egym}.
This setting stresses eviction in two orthogonal ways.
First, we vary the \emph{scoring scope}: in \emph{Prefill-only} the cache is compressed once during prefill, while in \emph{Prefill\,+\,Decode} eviction runs throughout the entire generation stream.
Second, and more importantly, we vary the \emph{eviction protocol}.
Under \emph{Full context every turn}, the entire history is re-prefilled at each agent turn and compression is re-applied from scratch, so the cache is only an inference-time approximation of full attention and evicted tokens can re-enter on later turns; eviction is always conditioned on the current turn's query, so it is effectively a \emph{single-turn} problem of deciding what is relevant to the question being answered \emph{right now}, with full access to the history each time.
Under \emph{True eviction}, the cache persists across turns and evicted tokens are gone permanently, making it a genuinely \emph{multi-turn} problem: the method must commit to keeping or discarding a token \emph{before} knowing which later turn will need it, so it must retain information whose relevance only surfaces several turns later.
Mistakes are irreversible and compound over the agent's lifetime.

\begin{table}[ht]
    \centering
    \small
    \caption{Agentic coding accuracy (\emph{Resolved/Pass@1} over $50$ tasks) at $20\%$ KV density (80\% eviction).
        \emph{Full context every turn} re-prefills and re-compresses the entire history at every agent turn (evicted tokens can return); \emph{True eviction} carries a persistent cache across turns (evicted tokens are gone for good).
        \textbf{Bold} marks the leading eviction method per block (Dense shown for reference).}
    \label{tab:agentic}
    \setlength{\tabcolsep}{6pt}
    \begin{tabular}{l l l c c}
        \toprule
        \textbf{Scope}                                   & \textbf{Protocol}                       & \textbf{Method}         & \textbf{KV density} & \textbf{Resolved/Pass@1} \\
        \midrule
        \multirow{10}{*}{\rotatebox[origin=c]{90}{\textbf{Prefill-only}}}
                                                         & \multirow{6}{*}{\makecell[l]{Full context\\every turn}} & Dense                   & 100\%               & 12 (24\%)                \\
                                                         &                                         & SnapKV                  & 20\%                & 8 (16\%)                 \\
                                                         &                                         & PyramidKV               & 20\%                & 7 (14\%)                 \\
                                                         &                                         & AdaSnapKV               & 20\%                & 6 (12\%)                 \\
                                                         &                                         & AdaPyramidKV            & 20\%                & 6 (12\%)                 \\
                                                         &                                         & \textbf{Nexus Sampling} & 20\%                & \textbf{9 (18\%)}        \\
        \cmidrule(l){2-5}
                                                         & \multirow{6}{*}{\makecell[l]{True\\eviction}}           & Dense                   & 100\%               & 12 (24\%)                \\
                                                         &                                         & SnapKV                  & 20\%                & \textbf{8 (16\%)}        \\
                                                         &                                         & PyramidKV               & 20\%                & 1 (2\%)                  \\
                                                         &                                         & AdaSnapKV               & 20\%                & \textbf{8 (16\%)}        \\
                                                         &                                         & AdaPyramidKV            & 20\%                & 1 (2\%)                  \\
                                                         &                                         & \textbf{Nexus Sampling} & 20\%                & \textbf{8 (16\%)}        \\
        \midrule
        \multirow{8}{*}{\rotatebox[origin=c]{90}{\textbf{Prefill\,+\,Decode}}}
                                                         & \multirow{4}{*}{\makecell[l]{Full context\\every turn}} & Dense                   & 100\%               & 12 (24\%)                \\
                                                         &                                         & H2O                     & 20\%                & 1 (2\%)                  \\
                                                         &                                         & MorphKV                 & 20\%                & 6 (12\%)                 \\
                                                         &                                         & \textbf{Nexus Sampling} & 20\%                & \textbf{7 (14\%)}        \\
        \cmidrule(l){2-5}
                                                         & \multirow{4}{*}{\makecell[l]{True\\eviction}}           & Dense                   & 100\%               & 12 (24\%)                \\
                                                         &                                         & H2O                     & 20\%                & 4 (8\%)                  \\
                                                         &                                         & MorphKV                 & 20\%                & \textbf{9 (18\%)}        \\
                                                         &                                         & \textbf{Nexus Sampling} & 20\%                & 8 (16\%)                 \\
        \bottomrule
    \end{tabular}
\end{table}

Table~\ref{tab:agentic} reports the results.
Under \emph{Full context every turn}, Nexus Sampling leads all eviction baselines in both scoring scopes, confirming that the Nexus score is a stronger selection signal than direct attention even when no token is permanently lost.
The protocol shift to \emph{True eviction} is where the structural differences surface: PyramidKV and AdaPyramidKV collapse from $7$/$6$ to $1$/$1$ once their evicted tokens can no longer be recovered, the failure mode an irreversible deterministic top-$K$ verdict induces, whereas Nexus Sampling remains stable at $8$ and matches the best baseline.
In the hardest \emph{Prefill\,+\,Decode $\cdot$ True eviction} setting, Nexus Sampling stays competitive at 8, one task behind MorphKV and ahead of H2O.

\subsection{Ablations}
\label{sec:exp-ablations}

\begin{figure*}[t]
    \centering
    \includegraphics[width=0.85\linewidth]{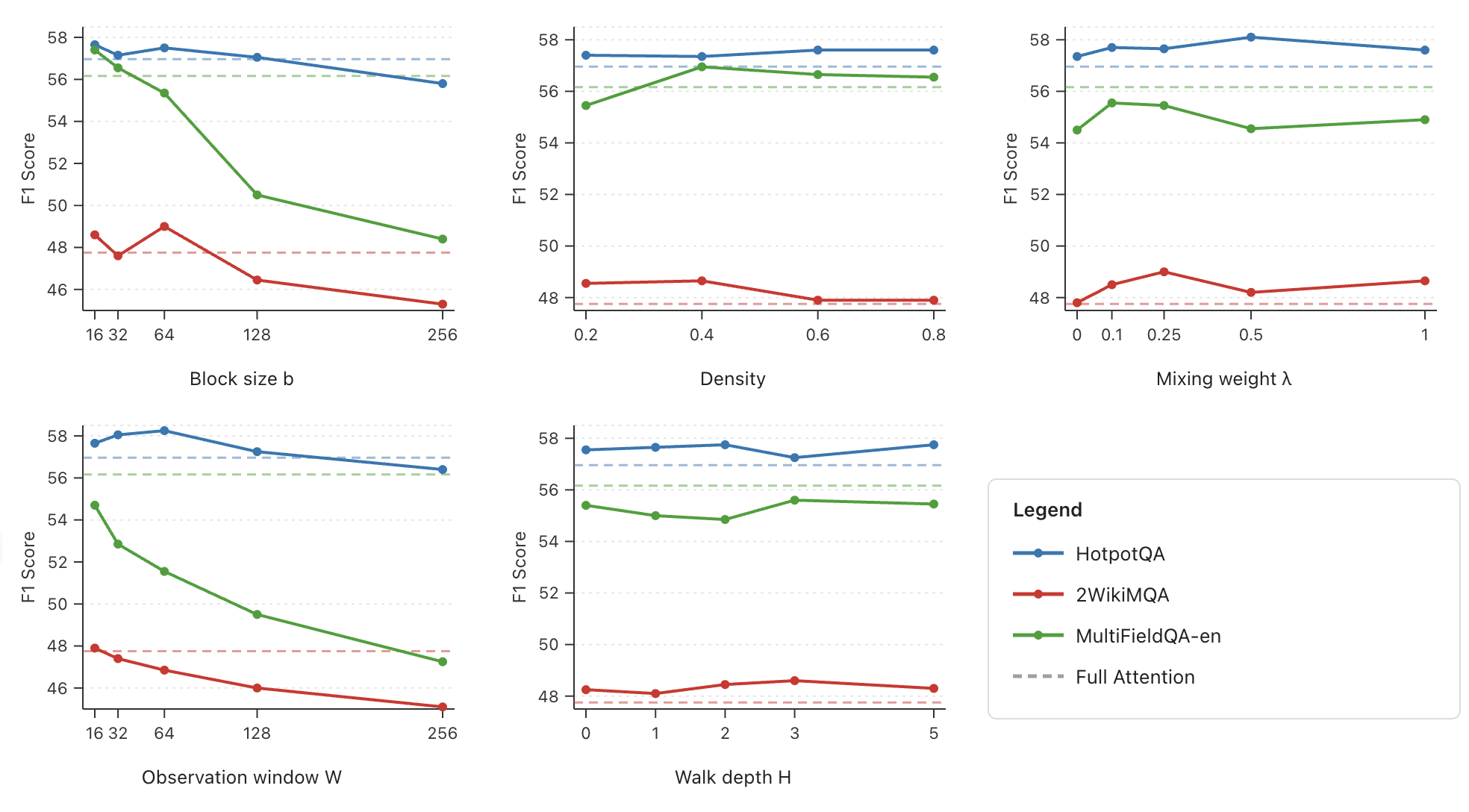}
    \caption{Ablations of the five Nexus knobs on Llama-3.1-8B at $20\%$ density, evaluated on three multi-hop QA tasks (HotpotQA, 2WikiMQA, MultiFieldQA-en).}
    \label{fig:ablations}
\end{figure*}

Nexus Sampling is robust to its hyperparameters: across walk depth $H$, mixing weight $\lambda$, observation window $W$, block size $b$, and density (Figure~\ref{fig:ablations}), the defaults of \S\ref{sec:exp-setup} sit inside a flat region on Llama-3.1-8B across three multi-hop QA tasks (HotpotQA, 2WikiMQA, MultiFieldQA-en).
First, the walk-depth sweep saturates by $H = 3$ and is indistinguishable from $H = 5$, matching the rank-1 propagation analysis of \S\ref{sec:theory-body}: a few walk steps are enough to surface bridge tokens, and additional iterations only resharpen what is already there, so $H$ does not need per-task tuning.
Second, accuracy is essentially flat from $20\%$ to $80\%$ density on all three tasks, confirming the aggressive setting we report in \S\ref{sec:exp-longbench}--\ref{sec:exp-ruler} loses no accuracy headroom; this is the empirical signature of the marginal-mass argument of \S\ref{sec:why}, where reservoir sampling preserves subtly important tokens that deterministic top-$K$ would erode rather than collapsing on the first below-cutoff step.
Details discussion appears in App.~\ref{app:ablations}.

\subsection{Inference Efficiency}
\label{sec:exp-throughput}

We measure the two efficiency axes that long-context eviction must improve under a fixed memory budget: decode throughput and steady-state per-sequence memory.
All runs use Llama-3.1-8B-Instruct in bf16 on a single H200 (143\,GB), prefill length $T_p = 8\text{K}$, batch size 1, against dense FlashAttention-2~\citep{dao2023flashattention2} and the baselines from Tables~\ref{tab:longbench}--\ref{tab:ruler}.

\begin{table}[ht]
  \centering
  \small
  \caption{Decode throughput (tokens/s) and per-step latency (ms) at $T_p = 8\text{K}$ prefill, batch size 1, on Llama-3.1-8B-Instruct (H200, bf16).
    Higher is better for throughput; lower is better for per-step latency.
    \textbf{Bold} marks the best value per row.}
  \label{tab:throughput}
  \setlength{\tabcolsep}{2pt}
  \resizebox{0.6\linewidth}{!}{%
    \begin{tabular}{c | l | c c c c}
      \toprule
      \multirow{2}{*}{\textbf{Decode $D$}} & \multirow{2}{*}{\textbf{Metric}}    & \multirow{2}{*}{\textbf{Dense}} & \multirow{2}{*}{\textbf{SnapKV}} & \multirow{2}{*}{\textbf{MorphKV}} & \textbf{Nexus}                                                  \\
                                  &                                     &                                 &                                  &                                   & \textbf{Sampling}                                               \\
      \midrule
      \multirow{2}{*}{4K}         & Throughput (tok/s, $\uparrow$)      & 35.1                            & 40.2                             & 31.4                              & \textbf{45.5}                                                   \\
                                  & Per-step (ms, $\downarrow$)         & 28.53                           & 24.85                            & 31.84                             & \textbf{22.00}                                                  \\
      \midrule
      \multirow{2}{*}{8K}         & Throughput (tok/s, $\uparrow$)      & 35.0                            & 34.2                             & 29.2                              & \textbf{45.2}                                                   \\
                                  & Per-step (ms, $\downarrow$)         & 28.55                           & 29.27                            & 34.27                             & \textbf{22.12}                                                  \\
      \midrule
      \multirow{2}{*}{16K}        & Throughput (tok/s, $\uparrow$)      & 36.0                            & 35.0                             & 32.8                              & \textbf{43.8}                                                   \\
                                  & Per-step (ms, $\downarrow$)         & 27.81                           & 28.55                            & 30.49                             & \textbf{22.83}                                                  \\
      \bottomrule
    \end{tabular}%
  }
\end{table}

\noindent\textbf{Decode throughput.}
Table~\ref{tab:throughput} reports decode-time throughput and per-step latency across decode lengths $D \in \{4\text{K}, 8\text{K}, 16\text{K}\}$.
Nexus Sampling delivers $1.22$--$1.30\times$ throughput and $15$--$23\%$ lower per-step latency over dense FlashAttention-2, beating both SnapKV and MorphKV at every $D$.

\begin{table}[ht]
  \centering
  \small
  \caption{Steady-state per-sequence decode memory ($\text{mem}_{\text{post}} - \text{mem}_{\text{after\_load}}$, model weights excluded) on Llama-3.1-8B-Instruct (bf16, H200, batch size 1) at $T_p = 8\text{K}$.
    Nexus Sampling pins the cache at a constant $\approx$1638-token budget regardless of decode length, so its decode memory is essentially flat in $D$, while dense FlashAttention-2 grows linearly.
    \textbf{Bold} marks the lower memory footprint per row.}
  \label{tab:memory}
  \setlength{\tabcolsep}{6pt}
  \begin{tabular}{c | c c | c}
    \toprule
    \textbf{Decode $D$} & \textbf{Dense FA-2} & \textbf{Nexus Sampling} & \textbf{Saving} \\
    \midrule
    4K                  & 1.5\,GiB            & \textbf{0.3\,GiB}       & $5.0\times$     \\
    8K                  & 2.0\,GiB            & \textbf{0.3\,GiB}       & $6.7\times$     \\
    16K                 & 3.0\,GiB            & \textbf{0.4\,GiB}       & $7.5\times$     \\
    32K                 & 5.0\,GiB            & \textbf{0.5\,GiB}       & $10.0\times$    \\
    \bottomrule
  \end{tabular}
\end{table}

\noindent\textbf{Per-sequence memory.}
Table~\ref{tab:memory} reports the steady-state per-sequence decode memory (GPU memory each sequence holds during decoding excluding model weights, which determines how many concurrent sequences fit under a fixed GPU budget).
Nexus Sampling saves $5.0\text{--}10.0\times$ over dense FlashAttention-2, with the saving growing with $D$ because the dense cache scales as $T_p + D$ while Nexus pins the working cache at a constant $20\%$ density of $T_p$; at $D = 32\text{K}$ this $10\times$ reduction translates into a comparable factor in achievable batch size on the same GPU.

\section{Related Work}
\label{sec:related_work}

We situate Nexus Sampling within the broader landscape of KV cache compression and reservoir sampling.
KV cache eviction methods (\textbf{StreamingLLM}~\citep{xiao2024streamingllm}, \textbf{H2O}~\citep{zhang2023h2o}, \textbf{SnapKV}~\citep{li2024snapkv}, \textbf{PyramidKV}~\citep{cai2024pyramidkv}, \textbf{Ada-KV}~\citep{feng2025adakv}, and \textbf{MorphKV}~\citep{ghadia2025morphkv}) all reduce to different scoring strategies fed into the same deterministic top-$K$ selection rule, leaving them vulnerable to the irreversible-verdict failure mode discussed in \S\ref{sec:why}.
Nexus Sampling is orthogonal to KV cache \emph{selection} (sparse attention) methods such as \textbf{Quest}~\citep{tang2024quest}, \textbf{BLASST}~\citep{yuan2025blasst}, and \textbf{Sketch-and-Walk}~\citep{le2026sketchwalk}, and can be composed with them: eviction decides which tokens stay resident in the cache, and sparse attention then decides which of the surviving tokens to read at each step, so the two stack naturally as a memory-side and a compute-side compression on the same cache.
Nexus Sampling also connects to classical weighted reservoir sampling~\citep{chao1982sampling,vitter1985random,efraimidis2006weighted}.
See Appendix~\ref{sec:app_related_work} for a detailed discussion of each method and its relationship to our approach.

\section{Conclusion}
\label{sec:conclusion}

We presented \textbf{Nexus Sampling}, a training-free KV cache eviction method that pairs an iterative walk surfacing bridge tokens with weighted reservoir sampling, replacing the deterministic top-$K$ selection that every prior eviction method shares.
Theoretically, the walk provably amplifies hub tokens above any uniform window average, and the reservoir's long-run survival decays as a product over eviction steps rather than collapsing on the first below-cutoff step.
Empirically, at $80\%$ KV cache eviction, Nexus Sampling matches dense attention within $\sim$1 point on LongBench while outperforming top-$K$ baselines on retrieval-heavy long-context tasks, with up to $10\times$ smaller per-sequence cache memory than dense FlashAttention-2.

\noindent\textbf{Implications for agentic context.}
As LLM workloads shift from one-shot prompts toward long-running agents, eviction moves from a one-time cost to a continuous, lifetime-shaping decision.
We expect that in this regime the algorithmic question of \emph{what selection primitive to use} eclipses the question of \emph{what scoring function to use}: existing methods already produce reasonable per-token weights, but only a primitive that preserves marginal mass across a long stream of eviction steps, like the reservoir step at the core of Nexus Sampling, is structurally matched to a streaming-budget setting.


\newpage
\bibliographystyle{plainnat}
\bibliography{ref}

@inproceedings{bai2024longbench,
  title     = {{LongBench}: A Bilingual, Multitask Benchmark for Long Context Understanding},
  author    = {Bai, Yushi and Lv, Xin and Zhang, Jiajie and Lyu, Hongchang and Tang, Jiankai and Huang, Zhidian and Du, Zhengxiao and Liu, Xiao and Zeng, Aohan and Hou, Lei and Dong, Yuxiao and Tang, Jie and Li, Juanzi},
  booktitle = {Proceedings of the 62nd Annual Meeting of the Association for Computational Linguistics (ACL)},
  year      = {2024}
}

@inproceedings{hsieh2024ruler,
  title     = {{RULER}: What's the Real Context Size of Your Long-Context Language Models?},
  author    = {Hsieh, Cheng-Ping and Sun, Simeng and Kriman, Samuel and Acharya, Shantanu and Rekesh, Dima and Jia, Fei and Zhang, Yang and Ginsburg, Boris},
  booktitle = {Conference on Language Modeling (COLM)},
  year      = {2024}
}

@article{jain2025r2egym,
  title   = {{R2E-Gym}: Procedural Environments and Hybrid Verifiers for Scaling Open-Weights {SWE} Agents},
  author  = {Jain, Naman and Singh, Jaskirat and Shetty, Manish and Zheng, Liang and Sen, Koushik and Stoica, Ion},
  journal = {arXiv preprint arXiv:2504.07164},
  year    = {2025}
}

@misc{deepswe2025,
  title        = {{DeepSWE}: Training a Fully Open-sourced, State-of-the-Art Coding Agent by Scaling RL},
  author       = {Luo, Michael and Jain, Naman and Singh, Jaskirat and Tan, Sijun and Patel, Ameen and Wu, Qingyang and Ariyak, Erran and Cai, Colin and Cuadron, Alejandro and Zhang, Tianjun and Stoica, Ion and Sen, Koushik},
  year         = {2025},
  howpublished = {Notion Blog},
  note         = {Agentica and Together AI}
}

@inproceedings{zhang2023h2o,
  title     = {{H2O}: Heavy-Hitter Oracle for Efficient Generative Inference of Large Language Models},
  author    = {Zhang, Zhenyu and Sheng, Ying and Zhou, Tianyi and Chen, Tianlong and Zheng, Lianmin and Cai, Ruisi and Song, Zhao and Tian, Yuandong and R{\'e}, Christopher and Barrett, Clark and Wang, Zhangyang and Chen, Beidi},
  booktitle = {Advances in Neural Information Processing Systems (NeurIPS)},
  year      = {2023}
}

@inproceedings{tang2024quest,
  title     = {{Quest}: Query-Aware Sparsity for Efficient Long-Context {LLM} Inference},
  author    = {Tang, Jiaming and Zhao, Yilong and Zhu, Kan and Xiao, Guangxuan and Kasikci, Baris and Han, Song},
  booktitle = {International Conference on Machine Learning (ICML)},
  year      = {2024}
}

@inproceedings{xiao2024streamingllm,
  title     = {Efficient Streaming Language Models with Attention Sinks},
  author    = {Xiao, Guangxuan and Tian, Yuandong and Chen, Beidi and Han, Song and Lewis, Mike},
  booktitle = {International Conference on Learning Representations (ICLR)},
  year      = {2024}
}

@article{yuan2025blasst,
  title   = {{BLASST}: Dynamic {BL}ocked Attention Sparsity via Softmax Thresholding},
  author  = {Yuan, Jiayi and Shinn, Cameron and Xu, Kai and Cui, Jingze and Klimiashvili, George and Xiao, Guangxuan and Zheng, Perkz and Li, Bo and Zhou, Yuxin and Ye, Zhouhai and You, Weijie and Zheng, Tian and Brown, Dominic and Wang, Pengbo and Hoehnerbach, Markus and Cai, Richard and Demouth, Julien and Owens, John D. and Hu, Xia and Han, Song and Liu, Timmy and Mao, Huizi},
  journal = {arXiv preprint arXiv:2512.12087},
  year    = {2025}
}

@inproceedings{cai2024pyramidkv,
  title     = {{PyramidKV}: Dynamic {KV} Cache Compression Based on Pyramidal Information Funneling},
  author    = {Cai, Zefan and Zhang, Yichi and Gao, Bofei and Liu, Yuliang and Li, Yucheng and Liu, Tianyu and Lu, Keming and Xiong, Wayne and Dong, Yue and Hu, Junjie and Xiao, Wen},
  booktitle = {Conference on Language Modeling (COLM)},
  year      = {2025}
}

@inproceedings{feng2025adakv,
  title     = {{Ada-KV}: Optimizing {KV} Cache Eviction by Adaptive Budget Allocation for Efficient {LLM} Inference},
  author    = {Feng, Yuan and Lv, Junlin and Cao, Yukun and Xie, Xike and Zhou, S. Kevin},
  booktitle = {Advances in Neural Information Processing Systems (NeurIPS)},
  year      = {2025}
}

@inproceedings{li2024snapkv,
  title     = {{SnapKV}: {LLM} Knows What You are Looking for Before Generation},
  author    = {Li, Yuhong and Huang, Yingbing and Yang, Bowen and Venkitesh, Bharat and Locatelli, Acyr and Ye, Hanchen and Cai, Tianle and Lewis, Patrick and Chen, Deming},
  booktitle = {Advances in Neural Information Processing Systems (NeurIPS)},
  year      = {2024}
}

@inproceedings{ghadia2025morphkv,
  title     = {Dialogue Without Limits: Constant-Sized {KV} Caches for Extended Responses in {LLM}s},
  author    = {Ghadia, Ravi and Kumar, Avinash and Jain, Gaurav and Nair, Prashant and Das, Poulami},
  booktitle = {International Conference on Machine Learning (ICML)},
  year      = {2025}
}

@article{vitter1985random,
  title   = {Random Sampling with a Reservoir},
  author  = {Vitter, Jeffrey S.},
  journal = {ACM Transactions on Mathematical Software},
  volume  = {11},
  number  = {1},
  pages   = {37--57},
  year    = {1985}
}

@article{efraimidis2006weighted,
  title   = {Weighted Random Sampling with a Reservoir},
  author  = {Efraimidis, Pavlos S. and Spirakis, Paul G.},
  journal = {Information Processing Letters},
  volume  = {97},
  number  = {5},
  pages   = {181--185},
  year    = {2006}
}

@article{chao1982sampling,
  title   = {A General Purpose Unequal Probability Sampling Plan},
  author  = {Chao, M.-T.},
  journal = {Biometrika},
  volume  = {69},
  number  = {3},
  pages   = {653--656},
  year    = {1982}
}

@inproceedings{dao2022flashattention,
  title     = {{FlashAttention}: Fast and Memory-Efficient Exact Attention with {IO}-Awareness},
  author    = {Dao, Tri and Fu, Daniel Y. and Ermon, Stefano and Rudra, Atri and R{\'e}, Christopher},
  booktitle = {Advances in Neural Information Processing Systems (NeurIPS)},
  year      = {2022}
}

@inproceedings{dao2023flashattention2,
  title     = {{FlashAttention-2}: Faster Attention with Better Parallelism and Work Partitioning},
  author    = {Dao, Tri},
  booktitle = {International Conference on Learning Representations (ICLR)},
  year      = {2024}
}

@inproceedings{pope2023efficiently,
  title     = {Efficiently Scaling Transformer Inference},
  author    = {Pope, Reiner and Douglas, Sholto and Chowdhery, Aakanksha and Devlin, Jacob and Bradbury, James and Levskaya, Anselm and Heek, Jonathan and Xiao, Kefan and Agrawal, Shivani and Dean, Jeff},
  booktitle = {Proceedings of Machine Learning and Systems (MLSys)},
  year      = {2023}
}

@misc{openai2024o1,
  title        = {{OpenAI} o1 System Card},
  author       = {{OpenAI}},
  year         = {2024},
  howpublished = {Technical report, OpenAI}
}

@misc{anthropic2025sonnet,
  title        = {{Claude Sonnet 4.5}},
  author       = {{Anthropic}},
  year         = {2025},
  howpublished = {Anthropic model release}
}

@inproceedings{jimenez2024swebench,
  title     = {{SWE}-bench: Can Language Models Resolve Real-World {GitHub} Issues?},
  author    = {Jimenez, Carlos E. and Yang, John and Wettig, Alexander and Yao, Shunyu and Pei, Kexin and Press, Ofir and Narasimhan, Karthik},
  booktitle = {International Conference on Learning Representations (ICLR)},
  year      = {2024}
}

@article{le2026sketchwalk,
  title   = {Scout Before You Attend: Sketch-and-Walk Sparse Attention for Efficient {LLM} Inference},
  author  = {Le, Hoang Anh Duy and Joshi, Sahil and Yang, Zeyu and Xu, Zhaozhuo and Shrivastava, Anshumali},
  journal = {arXiv preprint arXiv:2602.07397},
  year    = {2026}
}

@article{le2026fafo,
  title   = {{FAFO}: Lossy {KV} Cache Compression for Lossless Inference Acceleration via Draftless Fumble Decoding},
  author  = {Le, Hoang Anh Duy and Zhong, Shaochen and Lu, Yifan and Dou, Yingtong and Yuan, Jiayi and Chuang, Yu-Neng and Fan, Xiran and Wang, Guanchu and Chen, Yuzhong and Hu, Xia},
  journal = {OpenReview preprint},
  note    = {\url{https://openreview.net/forum?id=oSk9tP5Mgs}},
  year    = {2026}
}

@article{joshi2026socket,
  title   = {{SOCKET}: {SO}ft Collision Kernel {E}sTimator for Sparse Attention},
  author  = {Joshi, Sahil and Chowdhury, Agniva and Bellinger, Wyatt and Kanakamedala, Amar and Singh, Ekam and Le, Hoang Anh Duy and Desai, Aditya and Shrivastava, Anshumali},
  journal = {arXiv preprint arXiv:2602.06283},
  year    = {2026}
}

@inproceedings{yuan2024longctx,
  title     = {{KV} Cache Compression, But What Must We Give in Return? {A} Comprehensive Benchmark of Long Context Capable Approaches},
  author    = {Yuan, Jiayi and Liu, Hongyi and Zhong, Shaochen and Chuang, Yu-Neng and Li, Songchen and Wang, Guanchu and Le, Duy and Jin, Hongye and Chaudhary, Vipin and Xu, Zhaozhuo and Liu, Zirui and Hu, Xia},
  booktitle = {Findings of the Association for Computational Linguistics: EMNLP 2024},
  pages     = {4623--4648},
  year      = {2024}
}

\newpage
\appendix

\section{Theoretical Analysis}
\label{sec:theory}

This appendix gives full proofs for the three statements of \S\ref{sec:theory-body} (Lemma~\ref{lem:streaming}, Theorem~\ref{thm:main}, and Proposition~\ref{prop:eviction-quality}), together with several supporting statements that we develop only here: reservoir unbiasedness (Lemma~\ref{lem:ares}), $n$-averaged concentration (Lemma~\ref{lem:nares}), window noise reduction (Lemma~\ref{lem:window}), and the walk decomposition that underlies Theorem~\ref{thm:main} (Lemma~\ref{lem:walk-decomp}).
Main-text statements are restated verbatim (or in a slightly expanded form when convenient for the proof); appendix-only statements are presented in full.

\subsection{Notation}
\label{app:notation}

\begin{table}[h]
    \centering
    \small
    \begin{tabularx}{\columnwidth}{lX}
        \toprule
        Symbol                                                       & Meaning                                                                                       \\
        \midrule
        $W$                                                          & observation window length (recent query tokens)                                               \\
        $\mathbf{Q} \in \mathbb{R}^{W \times D}$                     & trailing $W$ query rows, head-averaged                                                        \\
        $\mathbf{K} \in \mathbb{R}^{T_k \times D}$                   & full key cache, head-averaged                                                                 \\
        $b$                                                          & tokens per key block                                                                          \\
        $N_k = \lceil T_k / b \rceil$                                & number of key blocks, indexed by $j$                                                          \\
        $\mathbf{P} \in \mathbb{R}^{W \times T_k}$                   & token-level attention matrix over the observation window                                      \\
        $\mathbf{M} \in \mathbb{R}^{W \times N_k}$                   & block-aggregated attention; $M_{w,j} = \sum_{t=(j-1)b+1}^{jb} P_{w,t}$                        \\
        $\hat{\mathbf{S}} \in \mathbb{R}^{W \times N_k}$             & row-normalized block attention; $\hat{S}_{w,j} = M_{w,j}/\sum_{j'} M_{w,j'}$                  \\
        $\mathbf{a}^{(q)} = \hat{\mathbf{S}}_{q,\cdot}$              & attention distribution of query row $q$                                                       \\
        $\mathbf{a}_\star = \mathbb{E}[\mathbf{a}^{(q)}]$            & population mean block-attention distribution                                                  \\
        $\boldsymbol{\Sigma}$                                        & covariance matrix; $\Sigma_{jk} = \Cov(\hat{S}_{q,j}, \hat{S}_{q,k})$                         \\
        $\mathbf{C} \in \mathbb{R}^{N_k}$                            & hub score accumulator from the multi-hop walk                                                 \\
        $\gamma_q = 1 + \langle \mathbf{a}^{(q)}, \mathbf{C}\rangle$ & per-step amplification factor                                                                 \\
        $h_j = \boldsymbol{\Sigma}_j^\top \mathbf{a}_\star$          & hub score of block $j$; $\boldsymbol{\Sigma}_j$ is the $j$-th column of $\boldsymbol{\Sigma}$ \\
        $w_j$                                                        & combined sampling weight (Eq.~\ref{eq:combined})                                              \\
        $\pi_j$                                                      & reservoir priority of block $j$ (Eq.~\ref{eq:ares})                                           \\
        $n$                                                          & reservoir averaging count                                                                     \\
        $K$                                                          & block retention budget after forced blocks are removed                                        \\
        $e$                                                          & cache-update event index in the streaming analysis                                            \\
        \bottomrule
    \end{tabularx}
    \caption{Notation used in \S\ref{sec:method} and Appendix~\ref{sec:theory}.}
    \label{tab:notation}
\end{table}

\subsection{Reservoir Sampling: Unbiasedness and Concentration}
\label{app:ares}

\begin{lemma}[Reservoir unbiasedness, $n=1$; \citealp{efraimidis2006weighted}]
    \label{lem:ares}
    Let $\pi_j = u_j^{1/w_j}$ with $u_j \sim \mathcal{U}(0,1)$ independent, and let $I_j=\mathbf{1}\{j\in\mathcal{S}\}$ indicate whether block $j$ belongs to the top-$K$ priority set.
    The selected set $\mathcal{S}$ is a probability-proportional-to-size sample without replacement.
    In particular $p_j=\Pr(I_j=1)>0$ whenever $w_j>0$, $p_j$ is monotone in $w_j$, and for any deterministic utility vector $\mathbf{z}$,
    \[
        \widehat{Z}_{\mathrm{HT}}
        =
        \sum_{j=1}^{N_k}\frac{I_j z_j}{p_j}
        \quad\text{satisfies}\quad
        \mathbb{E}[\widehat{Z}_{\mathrm{HT}}]=\sum_{j=1}^{N_k}z_j .
    \]
\end{lemma}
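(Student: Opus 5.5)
The plan is to reduce the priority rule to competing exponential clocks. Set $E_j = -\log u_j$, so the $E_j$ are i.i.d.\ $\mathrm{Exp}(1)$, and
\[
  -\log \pi_j \;=\; E_j / w_j \;\sim\; \mathrm{Exp}(\text{rate } w_j).
\]
These are independent across $j$. Since $x\mapsto -\log x$ is strictly decreasing, the top-$K$ set by $\pi_j$ is exactly the set of the $K$ smallest clock values $T_j = E_j/w_j$. Ties occur with probability zero, so $\mathcal{S}$ is almost surely well defined. Blocks with $w_j=0$ have $T_j=\infty$; they are never selected (given enough positive-weight blocks) and can be set aside.

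Next I would show that ordering the clocks is successive weighted sampling without replacement, which is the precise meaning of ``probability-proportional-to-size without replacement'' here. This is the step I expect to be the main obstacle, mainly in being careful about conditioning. The smallest of independent exponentials has the standard law: block $j$ attains the minimum with probability $w_j/\sum_k w_k$, and the minimum is independent of which index attains it. Condition on the first winner $j_1$ and the minimum value $m$. By memorylessness, the residuals $T_k - m$ for $k\neq j_1$ are again independent $\mathrm{Exp}(w_k)$. The second winner is therefore drawn with probability $w_k/\sum_{k'\neq j_1} w_{k'}$. Inducting over $K$ rounds gives the full successive-sampling law of $\mathcal{S}$, as in \citet{efraimidis2006weighted}. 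I would write the joint density integral explicitly once, for the first step only, and invoke memorylessness for the rest.

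Positivity and monotonicity then follow quickly. For positivity, $p_j \ge \Pr(j \text{ is drawn first}) = w_j/\sum_k w_k > 0$ whenever $w_j>0$. For monotonicity, I would use a coupling rather than the explicit successive-sampling formula. Fix all $E_k$ and raise $w_j$ to $w_j' > w_j$. Then $T_j' = E_j/w_j' \le T_j$ pointwise, while every other $T_k$ is unchanged. The rank of $j$ among the clocks can only improve, so $\{j\in\mathcal{S}\}\subseteq\{j\in\mathcal{S}'\}$ pathwise, and hence $p_j \le p_j'$. The same coupling gives strict monotonicity on the event where $j$ sits just outside the cutoff, but non-strict monotonicity is all the statement needs.

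Finally, unbiasedness of $\widehat{Z}_{\mathrm{HT}}$ is linearity of expectation. Since $\mathbb{E}[I_j]=p_j$ and $p_j>0$ for every positive-weight block (by the previous step), each term satisfies $\mathbb{E}[I_j z_j/p_j] = z_j$ for deterministic $\mathbf{z}$. Summing over $j$ gives $\sum_j z_j$. Zero-weight blocks must either have $z_j=0$ or be excluded from the sum, and I would note this caveat explicitly.
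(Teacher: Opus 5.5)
Your proposal is correct and follows the paper's proof. Both reduce the priorities to independent exponential clocks $E_j/w_j$, obtain monotonicity by coupling through the same $u_j$, and get the Horvitz--Thompson identity by linearity of expectation. Your memorylessness induction for the successive-sampling law, the explicit bound $p_j \ge w_j/\sum_k w_k$, and the zero-weight caveat make explicit what the paper states in a line each.
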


\begin{proof}
    The priority rule of \citet{efraimidis2006weighted} is equivalent to drawing independent exponential clocks $E_j=-\log(u_j)/w_j$ with rates $w_j$ and taking the first $K$ arrivals. This is the standard probability-proportional-to-size without-replacement law. Positivity follows because a positive-rate exponential clock can arrive among the first $K$ with non-zero probability; monotonicity follows by coupling two clocks with the same $u_j$, since $u_j^{1/w_j}$ increases with $w_j$ (equivalently, $-\log(u_j)/w_j$ decreases with $w_j$). The Horvitz--Thompson identity is immediate:
    $\mathbb{E}[I_j z_j/p_j]=z_j$ for each $j$, and summing over blocks gives the claim.
\end{proof}

The statement above is intentionally phrased in terms of inclusion probabilities $p_j$ rather than the simplified formula $K w_j/\sum_{j'}w_{j'}$.
For fixed-size weighted sampling without replacement, the exact marginal $p_j$ has no universal linear closed form for all weights and budgets; what Nexus needs is the weaker but correct fact that positive-weight blocks have positive, monotone inclusion probability and admit unbiased retained-mass estimation.

\begin{lemma}[$n$-averaged concentration]
    \label{lem:nares}
    Let $\pi_j^{(n)} = \tfrac{1}{n}\sum_{i=1}^n (u_j^{(i)})^{1/w_j}$ with $u_j^{(i)} \sim \mathcal{U}(0,1)$ independent.
    Then $\mathbb{E}[\pi_j^{(n)}] = w_j / (w_j + 1)$ and $\mathrm{Var}(\pi_j^{(n)}) \to 0$ at rate $1/n$ as $n \to \infty$.
    If the weights $w_j$ are distinct (as guaranteed in practice by the recency tie-break $\varepsilon_{\mathrm{tie}}\,r_j$), the rank of block $j$ under $\pi_j^{(n)}$ converges almost surely to the rank under $w_j / (w_j + 1)$ as $n \to \infty$.
\end{lemma}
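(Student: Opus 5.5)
The plan is a direct moment calculation followed by a law-of-large-numbers argument for the ranks. First, for a single variate $X_j = u^{1/w_j}$ with $u \sim \mathcal{U}(0,1)$ and $w_j>0$, I will compute moments via $\mathbb{E}[u^{s}] = \int_0^1 x^{s}\,dx = 1/(s+1)$ for $s > -1$. Taking $s = 1/w_j$ gives $\mathbb{E}[X_j] = 1/(1/w_j + 1) = w_j/(w_j+1)$. Taking $s = 2/w_j$ gives $\mathbb{E}[X_j^2] = w_j/(w_j+2)$. Hence
\[
\mathrm{Var}(X_j) = \frac{w_j}{w_j+2} - \frac{w_j^2}{(w_j+1)^2} = \frac{w_j}{(w_j+2)(w_j+1)^2},
\]
which is finite; in fact it is bounded by $1/4$, since $X_j \in [0,1]$. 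Since $\pi_j^{(n)}$ is the mean of $n$ i.i.d.\ copies of $X_j$, linearity gives $\mathbb{E}[\pi_j^{(n)}] = w_j/(w_j+1)$. Independence gives $\mathrm{Var}(\pi_j^{(n)}) = \mathrm{Var}(X_j)/n$, which is the claimed $1/n$ rate.

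For the rank statement, I will fix the finite set of blocks $\{1,\dots,N_k\}$ with distinct weights. The map $w \mapsto w/(w+1)$ is strictly increasing on $(0,\infty)$, so the limits $\mu_j = w_j/(w_j+1)$ are also distinct. Let $\Delta = \min_{j\neq k}|\mu_j-\mu_k| > 0$.

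By the strong law of large numbers, applied to each of the finitely many blocks (the variates are bounded, so SLLN applies), $\pi_j^{(n)} \to \mu_j$ almost surely for every $j$ simultaneously, because a finite intersection of probability-one events still has probability one. On this event there is a random $n_0$ such that $|\pi_j^{(n)} - \mu_j| < \Delta/2$ for all $j$ and all $n \ge n_0$. Then $\mu_j > \mu_k$ implies $\pi_j^{(n)} > \pi_k^{(n)}$, so the ordering induced by $\pi^{(n)}$ coincides with the ordering by $\mu$. Consequently the rank of each block, and also the top-$K$ set, is eventually constant and equal to its limit. Convergence of a discrete quantity here means eventual equality, which is exactly what this argument delivers.

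The main subtlety is the coupling across $n$. The almost-sure statement requires the variates $u_j^{(1)}, u_j^{(2)}, \dots$ to form a single infinite i.i.d.\ sequence per block, with $\pi_j^{(n)}$ as its running average. Without such a coupling, only convergence in probability of ranks holds. To cover that weaker reading, I will add a Chebyshev bound: $\Pr(|\pi_j^{(n)}-\mu_j| \ge \Delta/2) \le 4\mathrm{Var}(X_j)/(n\Delta^2)$, followed by a union bound over $j$. This gives the probability that the ranking disagrees with the ranking by $\mu$ as at most $N_k/(n\Delta^2)$, using $\mathrm{Var}(X_j) \le 1/4$. It also gives a quantitative $1/n$ rate for rank agreement. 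The tie-break $\varepsilon_{\mathrm{tie}} r_j$ is used only to guarantee $\Delta > 0$.
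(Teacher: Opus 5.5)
Your proof is correct and follows the paper's route: compute the moments of $u^{1/w_j}$, then apply the strong law and the strict monotonicity of $w\mapsto w/(w+1)$. The paper gets the moments from the density $w x^{w-1}$ and you get them from $\mathbb{E}[u^s]=1/(s+1)$, which is the same computation. Beyond that, your $\Delta/2$-separation argument over the finitely many blocks makes explicit the rank-convergence step the paper compresses into one line, and your Chebyshev-plus-union bound, giving disagreement probability at most $N_k/(n\Delta^2)$, is a quantitative addition the paper does not state.
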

\begin{proof}
    For $u \sim \mathcal{U}(0,1)$, the random variable $X = u^{1/w}$ has CDF $F_X(x) = \Pr(u \leq x^w) = x^w$ on $[0,1]$, density $f_X(x) = w x^{w-1}$, and moments $\mathbb{E}[X^k] = w/(w+k)$. Hence $\mathbb{E}[X] = w/(w+1)$ and $\mathrm{Var}(X) = w/(w+2) - (w/(w+1))^2 = w/[(w+1)^2(w+2)]$. Averaging $n$ i.i.d.\ copies gives $\mathbb{E}[\pi_j^{(n)}] = w_j/(w_j+1)$ and $\mathrm{Var}(\pi_j^{(n)}) = w_j/[n(w_j+1)^2(w_j+2)] = O(1/n)$. By the strong law of large numbers, $\pi_j^{(n)} \xrightarrow{a.s.} w_j/(w_j+1)$, and the monotonicity of $w/(w+1)$ in $w$ ensures that the rank under $\pi_j^{(n)}$ converges a.s.\ to the rank under $w_j/(w_j+1)$.
\end{proof}

Because $w_j/(w_j+1)$ is strictly increasing in $w_j$, the deterministic limit induces the same ordering as top-$K$ by $w_j$ directly.
The averaging count $n$ is therefore a \emph{single-knob} interpolation between unbiased sampling (Lemma~\ref{lem:ares}) and deterministic top-$K$ ($n \to \infty$), with no other change to the pipeline.

\subsection{Streaming Marginal-Mass Survival}
\label{app:streaming}

\noindent\textbf{Lemma~\ref{lem:streaming} (Min over steps vs.\ product over steps; restated).}
\emph{Consider a sequence of eviction steps $e = 1,\dots,E$ with per-step weights $w_j^{(e)} > 0$ and a fixed retention budget $K$ at every step.
    Let $S_j^{\text{top-}K}(E)$ and $S_j^{\text{res}}(E)$ denote the long-run survival probability of block $j$ under deterministic top-$K$ and reservoir sampling respectively.
    Then $S_j^{\text{top-}K}(E) = \prod_{e=1}^E \mathbf{1}\{w_j^{(e)} \geq w_{(K)}^{(e)}\}$, which collapses to zero as soon as any single step has $w_j^{(e)} < w_{(K)}^{(e)}$; while $S_j^{\text{res}}(E) = \prod_{e=1}^E q_j^{(e)}$ with $q_j^{(e)} \in (0,1]$ strictly positive whenever $w_j^{(e)} > 0$.}
\begin{proof}
    Under deterministic top-$K$, block $j$ survives step $e$ iff $w_j^{(e)} \geq w_{(K)}^{(e)}$.
    Survival of all $E$ steps is therefore the indicator $\prod_{e=1}^E \mathbf{1}\{w_j^{(e)} \geq w_{(K)}^{(e)}\}$, which is zero whenever a single step falls below the cutoff.
    Under reservoir sampling, Lemma~\ref{lem:ares} gives a positive per-step inclusion probability $q_j^{(e)}$ whenever $w_j^{(e)} > 0$.
    With independent priority draws at different eviction steps, the survival probability is $\prod_e q_j^{(e)}$, which is strictly positive whenever every $q_j^{(e)}$ is.
\end{proof}

This is the precise sense in which the reservoir step is the \emph{right} primitive for the streaming-budget regime: deterministic top-$K$ converts a single marginal-rank misjudgment into permanent, irrecoverable loss (the block is gone from the cache and cannot re-enter), while reservoir sampling assigns strictly positive inclusion probability at every event, so the long-run survival probability remains bounded away from zero as long as the block's per-event weight is positive.

\subsection{Window Averaging as Noise Reduction}
\label{app:window}

\begin{lemma}[Window noise reduction]
    \label{lem:window}
    If each row $\mathbf{a}^{(q)} = \boldsymbol{\mu} + \boldsymbol{\xi}_q$ with $\mathbb{E}[\boldsymbol{\xi}_q] = \mathbf{0}$ and per-coordinate variance $\leq \nu^2$, then with probability $\geq 1 - \delta$,
    \[
        \|\mathbf{a} - \boldsymbol{\mu}\|_\infty \leq \nu\sqrt{2\log(2N_k/\delta)/W}.
    \]
\end{lemma}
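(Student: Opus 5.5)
The plan is to treat $\mathbf{a}-\boldsymbol{\mu}=\tfrac{1}{W}\sum_{q=1}^{W}\boldsymbol{\xi}_q$ coordinate by coordinate. I would prove a sub-Gaussian tail bound for each block $j$, then take a union bound over the $N_k$ blocks to control the $\ell_\infty$ norm.

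\textbf{Interpreting the hypothesis.} The rate $\nu\sqrt{2\log(2N_k/\delta)/W}$ has the form of a sub-Gaussian (Chernoff) bound, and a bound on the variance alone does not give it. So I would read the assumption as follows: the rows are independent across $q$, and each coordinate $\xi_{q,j}$ is zero-mean and sub-Gaussian with variance proxy $\nu^2$, meaning $\mathbb{E}[e^{s\xi_{q,j}}]\le e^{s^2\nu^2/2}$ for all $s$. I would state this reading explicitly in the proof. The main obstacle is justifying it. The coordinates are bounded, since $a^{(q)}_j\in[0,1]$, but Hoeffding's lemma on that range only yields variance proxy $1/4$, not $\nu^2$. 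I would therefore either assume the sub-Gaussian proxy directly or note that it holds with $\nu^2=1/4$. As a fallback for the variance-only hypothesis, I would also record the weaker bound $\nu\sqrt{N_k/(W\delta)}$, obtained from Chebyshev's inequality plus the same union bound.

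\textbf{Per-coordinate tail.} Fix $j$ and set $X_j=\tfrac{1}{W}\sum_{q}\xi_{q,j}$. By independence, $\mathbb{E}[e^{sX_j}]=\prod_q\mathbb{E}[e^{(s/W)\xi_{q,j}}]\le e^{s^2\nu^2/(2W)}$. The Chernoff bound, optimized at $s=tW/\nu^2$, gives $\Pr(X_j>t)\le e^{-Wt^2/(2\nu^2)}$. Applying the same argument to $-X_j$ gives $\Pr(|X_j|>t)\le 2e^{-Wt^2/(2\nu^2)}$.

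\textbf{Union bound and conclusion.} A union bound over the $N_k$ coordinates gives
\[
\Pr\bigl(\|\mathbf{a}-\boldsymbol{\mu}\|_\infty>t\bigr)\le 2N_k\,e^{-Wt^2/(2\nu^2)}.
\]
Setting the right-hand side equal to $\delta$ and solving for $t$ gives $t=\nu\sqrt{2\log(2N_k/\delta)/W}$, which is the claimed bound with probability at least $1-\delta$.
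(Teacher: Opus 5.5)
Your argument is the same as the paper's: a per-coordinate tail bound $\Pr(|(\mathbf{a}-\boldsymbol{\mu})_j|\geq t)\leq 2\exp(-Wt^2/(2\nu^2))$, a union bound over the $N_k$ blocks, and solving $2N_k e^{-Wt^2/(2\nu^2)}=\delta$ for $t$. Your caveat about the hypothesis is warranted, and the paper's proof has the same gap. It silently assumes independent rows with coordinate-wise sub-Gaussian noise and cites Hoeffding on $[-1,1]$, which yields proxy $1$ rather than $\nu^2$. The stated rate therefore needs the sub-Gaussian-proxy-$\nu^2$ reading you make explicit, or else holds with $\nu=1/2$ by boundedness. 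A variance bound alone fails, for instance for rows $(X,1-X)$ with $X$ a rare Bernoulli variable.
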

\begin{proof}
    By construction $\mathbf{a} - \boldsymbol{\mu} = (1/W)\sum_q \boldsymbol{\xi}_q$, an average of $W$ independent zero-mean coordinate-wise sub-Gaussian random vectors. For any block $j$, Hoeffding's inequality (applied to bounded coordinates of $\boldsymbol{\xi}_q$ on $[-1,1]$) gives
    \[
        \Pr\!\bigl(|\,(\mathbf{a}-\boldsymbol{\mu})_j\,| \geq t\bigr) \leq 2\exp(-Wt^2/(2\nu^2)).
    \]
    Taking a union bound over the $N_k$ block coordinates and solving for $t$ at confidence $1-\delta$ yields the claim.
\end{proof}

The bound is the standard $1/\sqrt{W}$ noise-reduction rate, with the $N_k$-dimensional union bound contributing only a $\log N_k$ factor.
The implication for the reservoir step is that a modest window length ($W \approx 16$--$32$) suffices to bring the per-block noise floor on the reservoir's input weights below the typical scale of meaningful per-block weight differences.

\subsection{Eviction Quality Under Approximate Future Utility}
\label{app:eviction-quality}

We restate Proposition~\ref{prop:eviction-quality} of \S\ref{sec:theory-body} in its full three-part form (the main-text statement combines parts 1 and 2 into a single bound) and prove it.
Let $z_j \geq 0$ denote the utility that block $j$ would contribute to future attention if it remained in cache; this can be instantiated as future block-attention mass, or as a value-norm-weighted utility $z_j=\sum_{\tau} \alpha_{\tau j}\|V_j\|$ over a short future horizon.

\noindent\textbf{Proposition~\ref{prop:eviction-quality} (restated and extended).}
\emph{Fix one cache-update event and let $I_j$ be the reservoir-sampling inclusion indicator with inclusion probability $p_j=\Pr(I_j=1)$. For any deterministic future-utility vector $\mathbf{z}$:}
\begin{enumerate}[leftmargin=*,topsep=2pt,itemsep=1pt]
    \item \emph{The Horvitz--Thompson retained-utility estimator}
          \[
              \widehat{Z}_{\mathrm{HT}}=\sum_{j=1}^{N_k}\frac{I_j z_j}{p_j}
          \]
          \emph{is unbiased for the full-cache utility $Z=\sum_j z_j$.}
    \item \emph{If $p_j\geq p_{\min}>0$ and $0\leq z_j\leq Z_{\max}$, then with probability at least $1-\delta$,}
          \[
              |\widehat{Z}_{\mathrm{HT}}-Z|
              \leq
              \sqrt{\frac{K}{\delta}\sum_{j=1}^{N_k}\frac{z_j^2}{p_j}}
              \leq
              Z_{\max}\sqrt{\frac{K N_k}{\delta p_{\min}}}.
          \]
    \item \emph{If the Nexus weight approximates future utility as $\| \mathbf{w}-\mathbf{z}\|_\infty\leq\eta$, then the expected evicted utility $L=\sum_j(1-I_j)z_j$ satisfies}
          \[
              \left|
              \mathbb{E}[L]
              -
              \sum_{j=1}^{N_k}(1-p_j)w_j
              \right|
              \leq
              \eta\sum_{j=1}^{N_k}(1-p_j)
              \leq
              \eta N_k .
          \]
\end{enumerate}

\begin{proof}
    The first claim is the Horvitz--Thompson identity already used in Lemma~\ref{lem:ares}. For the second claim, let $A_j=I_jz_j/p_j$. Because exactly $K$ blocks are retained, $(\sum_j A_j)^2 \leq K\sum_j A_j^2$. Therefore
    \[
        \mathbb{E}[\widehat{Z}_{\mathrm{HT}}^2]
        \leq
        K\sum_j \mathbb{E}\!\left[\frac{I_jz_j^2}{p_j^2}\right]
        =
        K\sum_j \frac{z_j^2}{p_j}.
    \]
    Since $\operatorname{Var}(\widehat{Z}_{\mathrm{HT}})\leq \mathbb{E}[\widehat{Z}_{\mathrm{HT}}^2]$, Chebyshev's inequality yields the stated bound. The final inequality follows from $z_j^2\leq Z_{\max}^2$ and $p_j\geq p_{\min}$. For the third claim,
    \[
        \mathbb{E}[L]=\sum_j(1-p_j)z_j,
    \]
    and subtracting $\sum_j(1-p_j)w_j$ gives
    $\left|\sum_j(1-p_j)(z_j-w_j)\right|\leq \eta\sum_j(1-p_j)$.
\end{proof}

The proposition separates the two sources of approximation.
The reservoir step contributes no systematic bias once inclusion probabilities are accounted for; the remaining eviction error is controlled by how well the score $\mathbf{w}$ predicts future utility.
This is where the observation window and walk matter: Lemma~\ref{lem:window} reduces estimation noise in direct utility, while Theorem~\ref{thm:main} raises the weight of hub blocks whose future utility is indirect rather than visible in one-hop attention.

\subsection{Multi-Hop Walk: Decomposition and Hub Amplification}
\label{app:walk}

This subsection states a rank-1 decomposition of the walk recurrence that underlies the hub-amplification argument, then proves Theorem~\ref{thm:main}.

\begin{lemma}[Walk decomposition and hub-score zero-sum]
    \label{lem:walk-decomp}
    Let $\mathbf{M}_q = \mathbf{a}^{(q)} (\mathbf{a}^{(q)})^\top \in \mathbb{R}^{N_k \times N_k}$ be the rank-1 per-query block-affinity matrix.
    The walk recurrence~\eqref{eq:walk} satisfies the exact identity
    \begin{align*}
        \mathbf{C}^{(H)}
        &\;=\; \Bigl(\textstyle\sum_{q=1}^H \mathbf{a}^{(q)}\Bigr) \\
        &\quad+\; \Bigl(\textstyle\sum_{q=1}^H \mathbf{M}_q\,\mathbf{C}^{(q-1)}\Bigr).
    \end{align*}
    Furthermore, $\sum_j h_j = 0$, i.e., the hub scores sum to zero.
\end{lemma}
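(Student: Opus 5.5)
The identity comes from unrolling the recurrence~\eqref{eq:walk} directly. Write $\mathbf{C}^{(0)}=\mathbf{0}$ and $\mathbf{C}^{(q)}=\mathbf{C}^{(q-1)}+\gamma_q\,\mathbf{a}^{(q)}$ with $\gamma_q = 1+\langle \mathbf{a}^{(q)},\mathbf{C}^{(q-1)}\rangle$, where the inner product uses the accumulator before the update. Split the increment as
\[
  \gamma_q\,\mathbf{a}^{(q)} = \mathbf{a}^{(q)} + \mathbf{a}^{(q)}\bigl((\mathbf{a}^{(q)})^\top \mathbf{C}^{(q-1)}\bigr).
\]
The scalar $\langle \mathbf{a}^{(q)},\mathbf{C}^{(q-1)}\rangle$ commutes past the vector, so the second term is exactly $\mathbf{M}_q\,\mathbf{C}^{(q-1)}$. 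Telescoping $\mathbf{C}^{(H)}=\sum_{q=1}^H(\mathbf{C}^{(q)}-\mathbf{C}^{(q-1)})$ then gives the stated sum of the first-order part $\sum_q \mathbf{a}^{(q)}$ and the rank-1 propagation part $\sum_q \mathbf{M}_q\mathbf{C}^{(q-1)}$. This step is pure algebra; I will only make explicit that the index convention matches~\eqref{eq:walk}.

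For the zero-sum claim, I will use that every row $\mathbf{a}^{(q)}$ lies on the simplex: by row normalization of $\widehat{\mathbf{S}}$, $\mathbf{1}^\top \mathbf{a}^{(q)}=1$ almost surely. Consequently $\mathbf{1}^\top(\mathbf{a}^{(q)}-\mathbf{a}_\star)=0$ almost surely, and therefore
\[
  \boldsymbol{\Sigma}\mathbf{1} = \mathbb{E}\bigl[(\mathbf{a}^{(q)}-\mathbf{a}_\star)(\mathbf{a}^{(q)}-\mathbf{a}_\star)^\top\mathbf{1}\bigr] = \mathbf{0}.
\]
Then $\sum_j h_j = \mathbf{1}^\top\boldsymbol{\Sigma}\,\mathbf{a}_\star = (\boldsymbol{\Sigma}\mathbf{1})^\top \mathbf{a}_\star = 0$, using the symmetry of $\boldsymbol{\Sigma}$ (the $j$-th column equals the $j$-th row).

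The only point needing care is this normalization fact. The convention $\sum_j a_j=1$ in~\eqref{eq:base-score} holds rowwise by the rownorm, and I will state explicitly that this makes the zero-sum an exact statement, not just an expected one. I expect no real obstacle: the lemma is an exact identity plus one line of linear algebra. The main risk is only the off-by-one in which $\mathbf{C}$ enters $\gamma_q$, which I will fix to the pre-update value as written.
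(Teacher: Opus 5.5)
Your proposal is correct and matches the paper's proof: the same exact split $\gamma_q\mathbf{a}^{(q)} = \mathbf{a}^{(q)} + \mathbf{M}_q\mathbf{C}^{(q-1)}$, summed from $\mathbf{C}^{(0)}=\mathbf{0}$, and the same use of the simplex constraint $\mathbf{1}^\top\mathbf{a}^{(q)}=1$ to make the covariance's sums vanish. The only difference is cosmetic: you derive $\boldsymbol{\Sigma}\mathbf{1}=\mathbf{0}$ and invoke symmetry, whereas the paper shows $\mathbf{1}^\top\boldsymbol{\Sigma}=\mathbf{0}^\top$ directly. In fact, since $h_j=\boldsymbol{\Sigma}_j^\top\mathbf{a}_\star$ gives $\sum_j h_j=\mathbf{a}_\star^\top\boldsymbol{\Sigma}\mathbf{1}$, your route does not need symmetry at all.
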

\begin{proof}
    For each step $q$, the update increment is
    \begin{align*}
        \gamma_q\,\mathbf{a}^{(q)}
        &= \bigl(1 + \langle\mathbf{a}^{(q)},\mathbf{C}^{(q-1)}\rangle\bigr)\,\mathbf{a}^{(q)} \\
        &= \mathbf{a}^{(q)} + \bigl(\mathbf{a}^{(q)\top}\mathbf{C}^{(q-1)}\bigr)\,\mathbf{a}^{(q)} \\
        &= \mathbf{a}^{(q)} + \mathbf{M}_q\,\mathbf{C}^{(q-1)},
    \end{align*}
    an exact equality with no residual. Summing $\mathbf{C}^{(q)} = \mathbf{C}^{(q-1)} + \mathbf{a}^{(q)} + \mathbf{M}_q\,\mathbf{C}^{(q-1)}$ from $q=1$ to $H$ with $\mathbf{C}^{(0)}=\mathbf{0}$ yields the stated identity. For the hub-sum claim: since $\sum_j a_j^{(q)} = 1$ a.s., we have $\sum_j \xi_{q,j} = 0$ a.s., so $\sum_j \Sigma_{jk} = \mathbb{E}[\sum_j \xi_{q,j}\,\xi_{q,k}] = 0$ for every $k$, giving $\boldsymbol{1}^\top\boldsymbol{\Sigma} = \mathbf{0}^\top$ and $\sum_j h_j = \boldsymbol{1}^\top\boldsymbol{\Sigma}\,\mathbf{a}_\star = 0$.
\end{proof}

The first term is the unwalked window aggregate (essentially $W \mathbf{a}$ if all window rows are used).
The second term is the genuinely multi-hop contribution: each $\mathbf{M}_q$ is a rank-1 affinity matrix encoding which block pairs query $q$ connects, and $\mathbf{M}_q\,\mathbf{C}^{(q-1)}$ aggregates the accumulated walk under that affinity structure.
The depth $H$ controls the highest-order hop captured; as in \citet{le2026sketchwalk}, small $H$ (e.g., $H = 3$) suffices in practice because effective composed attention depth in transformer stacks is empirically shallow.

\subsubsection{Hub Amplification (Main Theorem)}

We first record the i.i.d.\ stationarity assumption and the formal definition of a hub block, then restate Theorem~\ref{thm:main} with the full intermediate quantities and prove it.

\begin{assumption}[i.i.d.\ stationarity]
    \label{asm:iid}
    The $H$ walk-step attention distributions $\mathbf{a}^{(1)},\dots,\mathbf{a}^{(H)}$ (i.e., the $H$ query rows consumed by the recurrence~\eqref{eq:walk}) are drawn i.i.d.\ from a fixed distribution with mean $\mathbf{a}_\star$ and covariance matrix $\boldsymbol{\Sigma}$.
\end{assumption}

\begin{definition}[Hub block]
    \label{def:hub}
    Block $j$ is a \emph{hub block} if $h_j = \boldsymbol{\Sigma}_j^\top \mathbf{a}_\star > 0$, i.e., queries that give high attention to block $j$ tend to also give high attention to other frequently attended blocks. Block $j$ is \emph{peripheral} if $h_j \leq 0$.
\end{definition}

\noindent\textbf{Theorem~\ref{thm:main} (restated and extended).}
\emph{Under Assumption~\ref{asm:iid}, the Nexus walk (Eq.~\ref{eq:walk}) with $H$ steps satisfies}
\begin{equation}
    \mathbb{E}[C_j^{(H)}]
    \;=\;
    B_H\,a_{\star,j}
    \;+\;
    D_H\,h_j
    \;+\;
    O(H\,\|\boldsymbol{\Sigma}\|_F^2),
    \label{eq:hub}
\end{equation}
\emph{where $B_H = \sum_{q=1}^H \mathbb{E}[\gamma_q] \geq H$ is a block-independent amplification factor, $D_H = \sum_{q=1}^{H-1} B_q \geq H(H-1)/2$ is the cumulative covariance-boost coefficient, and the $O(\cdot)$ term captures second-order covariance interactions. Since $\sum_j h_j = 0$ (Lemma~\ref{lem:walk-decomp}), $L_1$-normalization gives}
\begin{align*}
    \mathbb{E}[\tilde{c}_j^{(H)}]
    &\;=\; a_{\star,j} + \tfrac{D_H}{B_H}\,h_j + O(\|\boldsymbol{\Sigma}\|_F^2); \\
    &\text{in particular,}\quad
    \mathbb{E}[\tilde{c}_j^{(H)}] \;\gtrless\; a_{\star,j}
    \;\Longleftrightarrow\; h_j \;\gtrless\; 0.
\end{align*}
\emph{Furthermore, hub blocks have strictly higher expected sampling weight $\mathbb{E}[w_j]$; by the monotonicity of inclusion probability in $w_j$ (Lemma~\ref{lem:ares}), their expected inclusion probability $\mathbb{E}[p_j]$ is strictly higher than the window average $\mathbf{a}$ alone would predict.}

\begin{proof}
    \noindent\textbf{Step 1: Unrolling the recurrence.}
    The walk (Eq.~\ref{eq:walk}) gives $\mathbf{C}^{(H)} = \sum_{q=1}^{H} \gamma_q\,\mathbf{a}^{(q)}$, where $\gamma_q = 1 + \langle\mathbf{a}^{(q)}, \mathbf{C}^{(q-1)}\rangle$ and $\gamma_1 = 1$. Taking expectations entry-wise:
    \begin{align}
        \mathbb{E}[C_j^{(H)}]
        &= \sum_{q=1}^{H} \mathbb{E}[\gamma_q\,a_j^{(q)}] \notag \\
        &= \sum_{q=1}^{H}
        \Bigl(
        \mathbb{E}[\gamma_q]\,a_{\star,j}
        + \Cov(\gamma_q,\,a_j^{(q)})
        \Bigr).
        \label{eq:decomp}
    \end{align}

    \noindent\textbf{Step 2: Block-independence of the amplification sum.}
    Under Assumption~\ref{asm:iid}, $\mathbf{a}^{(q)}$ is independent of $\mathbf{C}^{(q-1)}$, so
    \[
        \mathbb{E}[\gamma_q] = 1 + \langle\mathbf{a}_\star,\,\mathbb{E}[\mathbf{C}^{(q-1)}]\rangle.
    \]
    This depends only on $\mathbf{a}_\star$ and $\boldsymbol{\Sigma}$ through $\mathbb{E}[\mathbf{C}^{(q-1)}]$, and crucially \emph{not} on block index $j$. Defining $B_H = \sum_{q=1}^H \mathbb{E}[\gamma_q]$, the first term in~\eqref{eq:decomp} contributes $B_H\,a_{\star,j}$ uniformly across all blocks.

    \noindent\textbf{Step 3: Covariance boost.}
    For $q \geq 2$, by the independence of $\mathbf{a}^{(q)}$ from $\mathbf{C}^{(q-1)}$ (Assumption~\ref{asm:iid}):
    \begin{align*}
        \Cov(\gamma_q,\,a_j^{(q)})
         & = \Cov\!\bigl(\langle\mathbf{a}^{(q)},\mathbf{C}^{(q-1)}\rangle,\;a_j^{(q)}\bigr) \\
         & = \mathbb{E}[\mathbf{C}^{(q-1)}]^\top\boldsymbol{\Sigma}_j,
    \end{align*}
    where $\boldsymbol{\Sigma}_j$ is the $j$-th column of $\boldsymbol{\Sigma} = \mathbb{E}[\mathbf{a}^{(q)}\mathbf{a}^{(q)\top}] - \mathbf{a}_\star\mathbf{a}_\star^\top$. (The equality follows by conditioning on $\mathbf{C}^{(q-1)}$ and using $\mathbb{E}[\mathbf{a}^{(q)}(\mathbf{a}^{(q)})^\top] = \boldsymbol{\Sigma} + \mathbf{a}_\star\mathbf{a}_\star^\top$.) We now compute $\mathbb{E}[\mathbf{C}^{(q-1)}]$ by induction. The base case is $\mathbb{E}[\mathbf{C}^{(1)}] = \mathbf{a}_\star = B_1\,\mathbf{a}_\star$ since $\gamma_1 = 1$. At each step, $\mathbb{E}[\mathbf{C}^{(q)}] = \mathbb{E}[\mathbf{C}^{(q-1)}] + \mathbb{E}[\gamma_q]\,\mathbf{a}_\star + O(\|\boldsymbol{\Sigma}\|)$, giving
    \begin{align*}
        \mathbb{E}[\mathbf{C}^{(q-1)}] &= B_{q-1}\,\mathbf{a}_\star + O(\|\boldsymbol{\Sigma}\|), \\
        B_{q-1} &= \textstyle\sum_{q'=1}^{q-1}\mathbb{E}[\gamma_{q'}].
    \end{align*}
    Substituting:
    \[
        \Cov(\gamma_q,\,a_j^{(q)}) = B_{q-1}\,h_j + O(\|\boldsymbol{\Sigma}\|_F^2).
    \]
    Summing over $q = 2,\dots,H$ and setting $D_H = \sum_{q=1}^{H-1} B_q$:
    \[
        \sum_{q=2}^{H} \Cov(\gamma_q,\,a_j^{(q)})
        = D_H\,h_j + O(H\|\boldsymbol{\Sigma}\|_F^2).
    \]
    Combining with Step~2 gives Eq.~\eqref{eq:hub}. Note $D_H \geq \sum_{q=1}^{H-1} q = H(H-1)/2$ since $B_q \geq q$.

    \noindent\textbf{Step 4: Directional consequence.}
    By Lemma~\ref{lem:walk-decomp}, $\sum_j h_j = 0$. Therefore the $L_1$-norm of $\mathbb{E}[\mathbf{C}^{(H)}]$ satisfies
    \begin{align*}
        \textstyle\sum_j \mathbb{E}[C_j^{(H)}]
        &= B_H\underbrace{\textstyle\sum_j a_{\star,j}}_{=\,1}
        + D_H\underbrace{\textstyle\sum_j h_j}_{=\,0} \\
        & + O(H\|\boldsymbol{\Sigma}\|_F^2) \\
        &= B_H + O(H\|\boldsymbol{\Sigma}\|_F^2).
    \end{align*}
    Dividing Eq.~\eqref{eq:hub} entry-wise by this norm:
    \begin{align*}
        \mathbb{E}[\tilde{c}_j^{(H)}]
        &= \frac{B_H\,a_{\star,j} + D_H\,h_j}{B_H} + O(\|\boldsymbol{\Sigma}\|_F^2) \\
        &= a_{\star,j} + \tfrac{D_H}{B_H}\,h_j + O(\|\boldsymbol{\Sigma}\|_F^2).
    \end{align*}
    Since $D_H/B_H > 0$, we have $\mathbb{E}[\tilde{c}_j^{(H)}] \gtrless a_{\star,j}$ if and only if $h_j \gtrless 0$.

    \noindent\textbf{Step 5: Hub retention.}
    By Lemma~\ref{lem:ares}, the inclusion probability $p_j(\mathbf{w}) = \Pr(I_j=1\mid\mathbf{w})$ is increasing in $w_j$ with all other weights fixed. The realized weight $w_j = C_j^{(H)}$ is random; taking expectations over $\mathbf{w}$ gives $\mathbb{E}[p_j] = \mathbb{E}_\mathbf{w}[p_j(\mathbf{w})]$. Hub blocks satisfy $\mathbb{E}[w_j] = \mathbb{E}[C_j^{(H)}] > a_{\star,j}$ after normalization (Step~4), while peripheral blocks have $\mathbb{E}[w_j] < a_{\star,j}$. Because $p_j(\mathbf{w})$ is increasing in $w_j$, a shift in the distribution of $w_j$ toward larger values (higher mean) raises $\mathbb{E}[p_j]$ via the law of total expectation. Consequently, the expected inclusion probability of hub blocks strictly exceeds what the unwalked window score $a_j$ would predict.
\end{proof}

\noindent\textbf{Remark.}
Assumption~\ref{asm:iid} requires i.i.d.\ query rows, which is an idealization: consecutive tokens within a document are correlated.
In practice the window acts as a local mixing device; the theorem should be read as characterizing the \emph{tendency} of the walk to amplify cross-query consensus blocks rather than as an exact finite-sample guarantee.

\section{Detailed Ablations}
\label{app:ablations}

This appendix expands \S\ref{sec:exp-ablations}, walking through each panel of Figure~\ref{fig:ablations}.
All settings match \S\ref{sec:exp-setup}: Llama-3.1-8B at $20\%$ density, evaluated on three multi-hop QA tasks (HotpotQA, 2WikiMQA, MultiFieldQA-en).
We selected this slice deliberately: multi-hop QA stresses the two failure modes Nexus Sampling is designed to address (bridge tokens and irreversible eviction), so any knob that mattered should move the numbers here before showing up on broader benchmarks.

\noindent\textbf{Walk depth $H$.}
$H = 0$ (no walk) is already strong because the windowed direct score $\mathbf{a}$ alone is informative; this is the regime every direct-attention baseline already operates in, which is why those baselines are competitive on retrieval tasks where a single bridge hop suffices.
$H \geq 1$ adds a small but consistent improvement on 2WikiMQA, the most multi-hop-sensitive of the three tasks: each additional walk step lets a token inherit influence from a neighbor one further hop away from the current query, and 2WikiMQA is precisely the dataset where the answer-supporting token is reached through such an intermediate.
$H \in \{2, 3\}$ captures the bulk of the gain and $H = 5$ is essentially indistinguishable from $H = 3$, matching the rank-1 propagation analysis of App.~\ref{sec:theory}: after a few steps the walk converges toward the principal direction of the attention operator and additional iterations only resharpen what is already there.
We therefore default to $H = 3$, which keeps the bridge-token benefit while leaving the per-step cost negligible relative to dense attention.

\noindent\textbf{Mixing weight $\lambda$.}
$\lambda = 0$ closes most of the gap to Nexus but stays consistently below $\lambda > 0$ on 2WikiMQA, confirming that the bridge term contributes signal beyond what direct attention already captures.
$\lambda = 1.0$ overweights the bridge term and slightly hurts HotpotQA, where the answer is often a single hop away and the direct score is the cleaner signal; mixing the two terms is what makes the score robust across multi-hop structures of different depth.
The safe operating range is $\lambda \in [0.25, 0.5]$ across all three datasets, and within that range the surface is flat, so the choice does not require per-task tuning; we default to $\lambda = 0.5$ throughout.

\noindent\textbf{Observation window $W$.}
Small windows ($W \in \{8, 16, 32\}$) track full attention closely.
Larger windows degrade sharply on tasks that need short-range responsiveness: MultiFieldQA-en and 2WikiMQA both drop $\sim$5--10 points at $W = 256$, because the per-query distribution gets diluted across queries that are no longer relevant to the current decoding step.
This is a familiar effect from streaming attention-scoring methods: averaging the score over too long a window turns the per-step decision into a stale running mean and erases the locality the model relies on.
$W = 16$ is a sweet spot, small enough to remain responsive yet large enough to absorb single-step noise, and matches the window sizes reported by recent eviction work~\citep{li2024snapkv,ghadia2025morphkv}.

\noindent\textbf{Block size $b$.}
$b \in \{16, 32, 64\}$ are within noise of one another, while $b \geq 128$ degrades sharply.
The mechanism is straightforward: too-coarse blocks aggregate over too many tokens at once, hiding the single-token signal that the per-block attention distribution needs to resolve.
The flat region at small $b$ is what makes block-wise eviction practical at all: it lets implementations align the eviction granularity with the hardware-friendly block sizes already used by FlashAttention kernels without paying an accuracy tax.
We default to $b = 32$, which is the smallest block size that lets the kernel achieve full memory-bandwidth utilization on the H200.

\noindent\textbf{Density.}
Accuracy is essentially flat from $20\%$ to $80\%$ density on all three tasks, confirming that the aggressive $20\%$ setting we report in \S\ref{sec:exp-longbench}--\ref{sec:exp-ruler} does not give up accuracy headroom to less aggressive configurations.
This is the empirical signature of the marginal-mass argument of \S\ref{sec:why}: under reservoir sampling, even tight budgets retain the subtly important tokens that deterministic top-$K$ would erode, and the long-run survival probability decays gracefully with the budget rather than collapsing on the first below-cutoff step.
Practically, this means the relevant question at deployment time is not ``how much can we shrink the cache before accuracy breaks?'' but ``how much memory headroom does the workload need for other purposes?'', which is the regime modern agentic deployments actually operate in.

\section{Comparison with Head-Adaptive Budget Allocation (Ada-KV)}
\label{app:adakv}

Ada-KV~\citep{feng2025adakv} is \emph{orthogonal} to Nexus Sampling: rather than proposing a new token-importance score, it redistributes a fixed layer-wide budget \emph{across attention heads} so that heads with more concentrated attention keep more tokens.
It can therefore be layered on top of any scoring rule, including SnapKV (yielding Ada-SnapKV) or, in principle, Nexus Sampling itself.
Because this head-budget allocation is a separate axis from the scoring contribution we study in the main paper, we report Ada-SnapKV separately here rather than mixing it into Table~\ref{tab:longbench}.

Table~\ref{tab:adakv} compares Ada-SnapKV against SnapKV and Nexus Sampling in the prefill-only regime, with Full Attention shown for reference.
Ada-SnapKV improves over plain SnapKV on average, as expected from its better budget allocation, but Nexus Sampling remains competitive on the average across all three models using a uniform per-head budget, indicating that the scoring contribution we introduce is complementary to the head-adaptive allocation of Ada-KV.

\begin{table*}[ht]
    \centering
    \small
    \caption{Prefill-only LongBench accuracy at $20\%$ density, isolating the orthogonal head-adaptive budget allocation of Ada-KV~\citep{feng2025adakv}.
        Ada-SnapKV applies the Ada-KV allocation on top of the SnapKV score; Nexus Sampling uses a uniform per-head budget.
        \textbf{Bold} marks the leading method per row across the eviction methods (Full Attention shown for reference).}
    \label{tab:adakv}
    \setlength{\tabcolsep}{2.3pt}
    \resizebox{\linewidth}{!}{%
        \begin{tabular}{ll c | cccccccc cccccccc | c}
            \toprule
            \textbf{Model}                & \textbf{Method}         & \textbf{Density} & \textbf{WIKI}  & \textbf{GOV}   & \textbf{HPQA}  & \textbf{LCC}   & \textbf{MNews} & \textbf{MFQA}  & \textbf{MUS}   & \textbf{NQA}   & \textbf{COUNT} & \textbf{Retr.} & \textbf{QAS}   & \textbf{QMS}   & \textbf{REPO}  & \textbf{SamS}  & \textbf{TREC}  & \textbf{TRIV}  & \textbf{Avg.}  \\
            \midrule
            \multirow{4}{*}{Llama-3.1-8B} & Full Attention          & 100\%             & 47.75          & 34.70          & 56.96          & 55.20          & 26.76          & 56.16          & 32.77          & 29.91          & 9.66           & 99.50          & 45.06          & 25.39          & 47.80          & 43.27          & 73.00          & 92.14          & 48.50          \\
                                          & SnapKV                  & 20\%              & 47.17          & 28.69          & \textbf{58.61} & \textbf{55.42} & \textbf{23.19} & 55.44          & 31.53          & 30.28          & \textbf{10.25} & \textbf{99.50} & 40.03          & 24.60          & \textbf{48.75} & 42.17          & 68.50          & 91.43          & 47.22          \\
                                          & Ada-SnapKV              & 20\%              & 48.42          & 29.34          & 57.90          & 55.05          & 22.82          & 56.32          & \textbf{32.93} & \textbf{30.42} & 10.08          & \textbf{99.50} & \textbf{43.73} & 25.15 & 47.89          & 42.20          & 72.00          & 92.36 & 47.88          \\
                                          & \textbf{Nexus Sampling} & 20\%              & \textbf{48.82} & \textbf{30.07} & 58.03          & 54.62          & 22.65          & \textbf{57.83} & 32.68          & 29.24          & 10.00          & 99.00          & 43.47          & \textbf{25.24} & 47.87          & \textbf{42.93} & \textbf{73.00} & \textbf{92.66} & \textbf{48.01} \\
            \midrule
            \multirow{4}{*}{Llama-3.2-1B} & Full Attention          & 100\%             & 31.23          & 29.65          & 35.47          & 29.83          & 25.88          & 43.35          & 18.45          & 20.40          & 3.67           & 4.50           & 16.35          & 21.84          & 36.04          & 39.77          & 61.50          & 78.54          & 31.03          \\
                                          & SnapKV                  & 20\%              & 30.45          & 22.92          & 35.48          & 30.75          & 19.61          & 39.04          & 17.15          & \textbf{21.58} & 3.67           & \textbf{5.00}  & 15.06          & 21.18          & 35.86          & 37.40          & 58.00          & 79.31          & 29.53          \\
                                          & Ada-SnapKV              & 20\%              & 28.58          & 24.00          & \textbf{35.78} & 29.75          & 20.59          & 40.82          & \textbf{18.24} & 20.87          & 2.64           & 4.50  & \textbf{15.88} & 21.52          & \textbf{36.25} & 37.47          & \textbf{61.00} & 79.15          & 29.82          \\
                                          & \textbf{Nexus Sampling} & 20\%              & \textbf{31.86} & \textbf{24.58} & 34.52          & \textbf{30.79} & \textbf{20.91} & \textbf{41.87} & 17.53          & 20.75          & \textbf{5.67}  & \textbf{5.00}  & 15.30          & \textbf{21.74} & 36.23          & \textbf{37.71} & 58.50          & \textbf{79.95} & \textbf{30.18} \\
            \midrule
            \multirow{4}{*}{Qwen3-8B}     & Full Attention          & 100\%             & 42.62          & 33.58          & 57.94          & 57.39          & 24.82          & 52.94          & 34.33          & 27.64          & 4.50           & 100.00         & 47.91          & 23.85          & 56.67          & 44.17          & 71.50          & 90.71          & 48.16          \\
                                          & SnapKV                  & 20\%              & 42.38          & 28.76          & 56.94          & \textbf{58.07} & 20.29          & \textbf{52.76} & \textbf{33.92} & \textbf{27.96} & 6.00           & \textbf{100.00}& 43.55          & 23.17          & 56.60          & \textbf{43.90} & 66.50          & 90.71          & 46.97          \\
                                          & Ada-SnapKV              & 20\%              & 42.25          & 30.66          & \textbf{57.73} & 57.36          & 20.57          & 52.12          & 34.28          & 27.83          & 5.50           & \textbf{100.00}& \textbf{45.27} & \textbf{23.90} & \textbf{56.95} & 43.21          & 70.50          & 90.71          & 47.43          \\
                                          & \textbf{Nexus Sampling} & 20\%              & \textbf{42.49} & \textbf{31.73} & 57.68          & 56.93          & \textbf{22.11} & 52.52          & 33.73          & 27.44          & 5.00           & 99.50          & 43.72          & \textbf{23.89} & 56.23          & 43.85          & \textbf{71.00} & \textbf{91.36} & \textbf{47.45} \\
            \bottomrule
        \end{tabular}%
    }
\end{table*}

When the same head-adaptive allocation is applied on RULER, the picture is consistent: it lifts every scoring rule, and Nexus Sampling under the Ada-KV allocation (Ada-Nexus Sampling) remains the leading method on the average across all three models, confirming that our scoring contribution composes with the orthogonal budget-allocation axis.
Table~\ref{tab:adakv-ruler} reports the prefill-only RULER results with the Ada-KV allocation; for Qwen3-8B we additionally report results without the chat template, since RULER's synthetic prompts are sensitive to the template wrapping.

\begin{table}[ht]
    \centering
    \small
    \caption{Prefill-only RULER accuracy (\%) across 4K--64K context lengths at $20\%$ density with the orthogonal Ada-KV~\citep{feng2025adakv} head-adaptive budget allocation applied to each scoring rule.
        \textbf{Bold} marks the leading method per row across the eviction methods (Full Attention shown for reference).}
    \label{tab:adakv-ruler}
    \setlength{\tabcolsep}{3pt}
    \resizebox{0.7\linewidth}{!}{%
        \begin{tabular}{ll c | ccccc | c}
            \toprule
            \textbf{Model}                & \textbf{Method}              & \textbf{Density} & \textbf{4K}    & \textbf{8K}    & \textbf{16K}   & \textbf{32K}   & \textbf{64K}   & \textbf{Avg.}  \\
            \midrule
            \multirow{4}{*}{Llama-3.1-8B} & Full Attention               & 100\%            & 96.15          & 95.91          & 95.43          & 91.35          & 86.30          & 93.03          \\
                                          & Ada-SnapKV                   & 20\%             & 82.21          & 89.18          & \textbf{94.23} & \textbf{92.79} & 83.89          & 88.46          \\
                                          & Ada-PyramidKV                & 20\%             & \textbf{84.86} & 89.18          & 91.83          & 87.26          & 81.97          & 87.02          \\
                                          & \textbf{Ada-Nexus Sampling}  & 20\%             & 85.80          & \textbf{91.30} & 94.00          & 88.70          & \textbf{84.60} & \textbf{88.88} \\
            \midrule
            \multirow{4}{*}{Llama-3.2-1B} & Full Attention               & 100\%            & 76.68          & 70.67          & 65.38          & 64.66          & 60.34          & 67.55          \\
                                          & Ada-SnapKV                   & 20\%             & \textbf{57.69} & \textbf{54.33} & 50.00          & 51.68          & 44.71          & 51.68          \\
                                          & Ada-PyramidKV                & 20\%             & 56.97          & 52.64          & 43.99          & 49.52          & 38.46          & 48.32          \\
                                          & \textbf{Ada-Nexus Sampling}  & 20\%             & 57.45          & 52.88          & \textbf{52.64} & \textbf{56.49} & \textbf{50.72} & \textbf{54.04} \\
            \midrule
            \multirow{4}{*}{Qwen3-8B}     & Full Attention               & 100\%            & 97.60          & 95.43          & 88.46          & 92.79          & --             & 93.57          \\
                                          & Ada-SnapKV                   & 20\%             & 73.56          & 73.80          & 79.57          & 77.41          & --             & 76.09          \\
                                          & Ada-PyramidKV                & 20\%             & 65.38          & 66.11          & 69.95          & 74.28          & --             & 68.93          \\
                                          & \textbf{Ada-Nexus Sampling}  & 20\%             & \textbf{75.48} & \textbf{81.49} & \textbf{83.41} & \textbf{84.38} & --             & \textbf{81.19} \\
            \midrule
            \multicolumn{9}{l}{\emph{Qwen3-8B without chat template.}}                                                                                                                       \\
            \midrule
            \multirow{3}{*}{Qwen3-8B}     & Full Attention               & 100\%            & 97.60          & 95.43          & 88.46          & 92.79          & --             & 93.57          \\
                                          & Ada-SnapKV                   & 20\%             & \textbf{81.73} & \textbf{89.66} & \textbf{88.70} & \textbf{90.38} & --             & \textbf{87.62} \\
                                          & Ada-PyramidKV                & 20\%             & \textbf{81.73} & 86.78          & 85.10          & 88.94          & --             & 85.64          \\
            \bottomrule
        \end{tabular}%
    }
\end{table}

\section{Extended Related Work}
\label{sec:app_related_work}

\noindent\textbf{KV cache eviction vs.\ KV cache selection.}
Long-context LLM inference is bottlenecked by both the compute and the memory footprint of the KV cache, and two complementary lines of work try to address these axes.
\emph{KV cache eviction} permanently discards tokens under a fixed memory budget, addressing memory and compute simultaneously; an evicted token is irrecoverable.
\emph{KV cache selection}, often called \emph{sparse attention} or query-aware sparsity, keeps the full cache and only chooses which tokens to attend to per query, targeting compute alone; missed tokens at step $t$ can be re-selected at step $t+1$.
Nexus Sampling and all baselines we compare against in \S\ref{sec:experiments} are eviction methods; while representative sparse attention methods include \textbf{Quest}~\citep{tang2024quest}, \textbf{BLASST}~\citep{yuan2025blasst}, \textbf{Sketch-and-Walk}~\citep{le2026sketchwalk}, and \textbf{SOCKET}~\citep{joshi2026socket}.
A third axis is \emph{lossless} compression, e.g., \textbf{FAFO}~\citep{le2026fafo}, which drafts off a lossy compressed cache and verifies in parallel against the full cache to recover bitwise-identical generation.
Nexus Sampling is orthogonal to both: any selection method can be applied on top of the surviving cache after eviction has run, and lossless schemes like FAFO still need an underlying retention policy when the full cache no longer fits.

\noindent\textbf{KV cache eviction.}
Existing eviction methods are best understood as different choices of the \emph{score} fed to a shared deterministic top-$K$ selection rule, and have been comprehensively benchmarked across long-context tasks alongside other compression families by~\citet{yuan2024longctx}.
\textbf{StreamingLLM}~\citep{xiao2024streamingllm} hard-codes a sink-plus-recency retention rule, motivated by the attention-sink phenomenon they identify at the first few absolute positions.
\textbf{H2O}~\citep{zhang2023h2o} retains \emph{heavy hitter} tokens by ranking on cumulative attention over the generation so far, formulating eviction as a dynamic submodular problem.
\textbf{SnapKV}~\citep{li2024snapkv} computes the score against an observation window of the most recent prompt tokens, which it shows are sufficient to identify the important positions for upcoming decode steps.
\textbf{PyramidKV}~\citep{cai2024pyramidkv} introduces a layer-wise budget allocation that retains more tokens in lower layers and fewer in higher ones, motivated by the pyramidal information-funneling pattern of attention.
\textbf{Ada-KV}~\citep{feng2025adakv} adapts the budget across heads instead of layers, exploiting the empirical observation that different heads concentrate their attention on different scales of the cache.
\textbf{MorphKV}~\citep{ghadia2025morphkv} maintains a constant-size cache by correlation-aware ranking against recent tokens, iteratively refining the retained set with lightweight updates.
Despite the diversity of scoring strategies, every method above commits to the surviving tokens by deterministic top-$K$ at each step, and is therefore subject to the irreversible-verdict failure mode of \S\ref{sec:why} regardless of how good its score is.
Nexus Sampling is the first eviction method, to our knowledge, to replace the selection primitive itself rather than refine the score only.

\noindent\textbf{Reservoir sampling.}
Classical reservoir sampling for streaming uniform sampling traces to \citet{chao1982sampling} and \citet{vitter1985random}; \citet{efraimidis2006weighted} extended the scheme to weighted-without-replacement sampling via the priority $u_j^{1/w_j}$, which is exactly the per-step primitive Nexus uses.
We instantiate this rule as an eviction primitive and add the $n$-fold averaged variant of \S\ref{sec:method-ares}, which interpolates between unbiased ($n=1$) and effectively deterministic ($n \to \infty$) retention with a single knob, recovering the existing deterministic-top-$K$ template as a limit case.

\noindent\textbf{Connection to Sketch-and-Walk.}
\citet{le2026sketchwalk} introduce a multi-hop walk for sparse \emph{attention} over a full cache, propagating block importance \emph{across layers} by composing per-layer block-attention matrices through an iterative recurrence.
The Nexus walk of \S\ref{sec:method-walk} is structurally analogous but operates on an orthogonal dimension: it propagates importance \emph{across tokens}, using the per-window per-query distribution to expose bridge tokens that no single recent query attends to strongly.
The two are therefore complementary rather than competing, addressing two stages of the same pipeline: Sketch-and-Walk reduces the cost of reading from a still-full cache, while Nexus Sampling decides which tokens survive into that cache.

\end{document}